\definecolor{mygray}{gray}{.9}
\definecolor{mylinkcolor}{RGB}{115,194,251}
\newcolumntype{H}{>{\setbox0=\hbox\bgroup}c<{\egroup}@{}}
\newcommand{\method}{ICE\xspace}
\newcommand{\eg}{\textit{e}.\textit{g}.}
\newcommand{\vq}{\vb{q}}
\newcommand{\vx}{\vb{x}}
\newcommand{\vy}{\vb{y}}
\newcommand{\vc}{\vb{c}}
\DeclareMathOperator*{\argmin}{argmin}
\DeclareMathOperator*{\argmax}{argmax}
\renewcommand{\argmax}{\mathop{\operator@font arg\max}}
\renewcommand{\argmin}{\mathop{\operator@font arg\min}}
\newtheorem{observation}{Observation}
\newtheorem{lemma}{Lemma}
\newtheorem*{prop*}{Proposition}
\newlength{\tablecaptionspace}
\newlength{\tablebottomspace}
\newlength{\commentWidth}
\newcommand\unmarkedfootnote[1]{%
  \begingroup
  \renewcommand\thefootnote{}
  \footnote{#1}
  \addtocounter{footnote}{-1}
  \endgroup
}
\title{In-Context Editing: Learning Knowledge from Self-Induced Distributions}
\author{Siyuan Qi\textsuperscript{1\textdagger,*} \hfill Bangcheng Yang\textsuperscript{1*} \hfill Kailin Jiang\textsuperscript{1, 2*} \hfill Xiaobo Wang\textsuperscript{1}\\
\textbf{Jiaqi Li\textsuperscript{1} \hfill Yifan Zhong\textsuperscript{1,3} \hfill Yaodong Yang\textsuperscript{1,3} \hfill Zilong Zheng\textsuperscript{1\textdagger}} \\ 
\textsuperscript{1}State Key Laboratory of General Artificial Intelligence, BIGAI
\\
\textsuperscript{2}{University of Science and Technology of China} \textsuperscript{3}Peking University
}
\begin{document}

\maketitle

\begin{abstract}
In scenarios where language models must incorporate new information efficiently without extensive retraining, traditional fine-tuning methods are prone to overfitting, degraded generalization, and unnatural language generation. To address these limitations, we introduce Consistent In-Context Editing (\method), a novel approach leveraging the model's in-context learning capability to optimize toward a contextual distribution rather than a one-hot target. \method introduces a simple yet effective optimization framework for the model to internalize new knowledge by aligning its output distributions with and without additional context. This method enhances the robustness and effectiveness of gradient-based tuning methods, preventing overfitting and preserving the model's integrity. We analyze \method across four critical aspects of knowledge editing: accuracy, locality, generalization, and linguistic quality, demonstrating its advantages. Experimental results confirm the effectiveness of \method and demonstrate its potential for continual editing, ensuring that the integrity of the model is preserved while updating information.
\end{abstract}

\section{Introduction}
\label{sec:intro}
In an ever-evolving world, it is crucial to update large language models (LLMs) to rectify outdated information and integrate new knowledge.  Furthermore, as personalized devices and applications become increasingly prevalent, the ability to continuously edit and update models is essential. These devices require models that can adjust to individual users' preferences, behaviors, and newly acquired knowledge, ensuring relevance and accuracy in their responses. Updating large language models (LLMs) presents a significant challenge, as it often requires retraining from scratch—a process that is both computationally prohibitive and impractical. Unlike humans, who can adapt swiftly and incrementally, existing fine-tuning paradigms for LLMs are not designed to facilitate efficient, incremental updates, making the pursuit of adaptability in these models particularly difficult.
\unmarkedfootnote{$^{\dagger}$ Corresponding author. * Equal contribution.}

Knowledge editing~\cite{zhang2024comprehensive} has emerged as a research area that addresses the challenge of efficiently updating LLM outputs in response to specific queries. It focuses on modifying particular pieces of knowledge in a language model \(\mathcal{M}_\theta\) using query-response pairs \(\{(\mathbf{q}_i, \mathbf{x}_i^*)\}_{i=1}^N\). For instance, given the query ``\textit{The president of the US is}", a model trained on outdated data might respond ``\textit{Joe Biden}", while the up-to-date response would be ``\textit{Donald Trump}". 

This is typically achieved by maximizing the probability $p_\theta(\mathbf{x}^*|\mathbf{q})$ using fine-tuning. However, this approach can be brittle in knowledge editing scenarios, where incorporation of new information with \textbf{minimal data} is crucial~\cite{meng2022locating}. This is because fine-tuning often minimizes the distance (e.g., cross-entropy) between predictions and \textbf{one-hot target distributions} $\delta_{\vx^{*}}(x)$, which can cause overfitting and result in model degradation or even model collapse, especially when data is scarce.

Various strategies have been proposed to address this problem, including constraining the gradient or weights~\cite{zhu2020modifying, meng2022locating} and adopting parameter-efficient fine-tuning approaches~\cite{yu2024melo}. However, these methods still rely on one-hot target distributions, failing to fully mitigate overfitting.  

\begin{figure*}[t!]
  \centering
  \includegraphics[width=\linewidth]{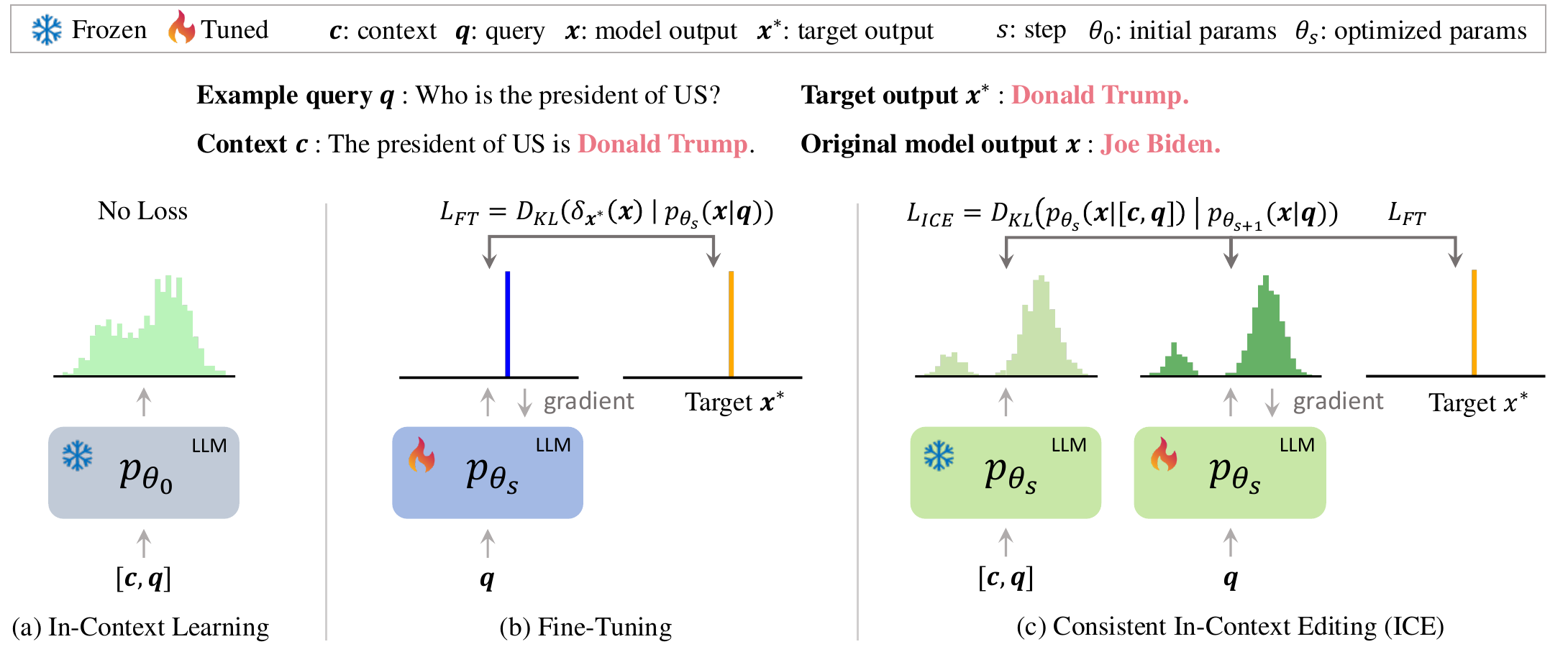}
  \caption{Overview. (a) \textbf{In-Context Learning}: Utilizes context prompts without modifying model parameters, allowing dynamic adaptation but lacking parameter updates. (b) \textbf{Traditional Fine-Tuning}: Minimizes the distance between predictions and a one-hot target ($\delta_{x^*}$) using cross-entropy loss ($L_{ft}$), often leading to overfitting. (c) \textbf{Consistent In-Context Editing (ICE)}: Adds a contextual loss ($L_{ice}$) to the traditional fine-tuning loss ($L_{ft}$). $L_{ice}$ minimizes the divergence between model outputs with and without a context prompt, aligning the model toward internalizing new knowledge. This helps ICE achieve effective knowledge incorporation while preserving general model stability.}
  \label{fig:overview}
  \vspace{-15pt}
\end{figure*}


To address these limitations, we introduce Consistent In-Context Editing (\method), a novel method that learns toward a contextual distribution to effectively internalize new knowledge. Specifically, \method guides the model’s output distribution $p_{\theta}(\vx | \vq)$ to align with a contextual distribution $p_{\theta}(\vx | [\vc, \vq])$ induced by a context $\vc$ that includes the target knowledge. We minimize the KL divergence between these two distributions, encouraging the model to internalize the new knowledge.

However, the initial contextual distribution may not always perfectly align with the desired update target, so we dynamically adjust it during optimization. We achieve this by combining the contextual loss as a regularization term with the original fine-tuning loss. As the fine-tuning loss is minimized, both the output distribution and the contextual output distribution are guided toward the desired target. The contextual loss, serving as a regularization term, confines the extent of these modifications, thereby ensuring the model’s integrity and preventing unintended degradation. This approach allows the model to adapt to the desired updates while preserving its original behavior (\autoref{fig:overview}).

While there exist methods that utilize in-context learning for knowledge editing~\cite{zheng2023can}, they add new information to context prompts without modifying the model parameters, requiring the models to always operate with the updated context. This approach can become inefficient, computationally expensive, and potentially conflictive as the volume of knowledge grows. In contrast, \method directly updates the model parameters, allowing it to manage a growing and evolving knowledge base.

Overall, \method introduces a simple yet effective optimization framework that significantly enhances the robustness of gradient-based tuning for language models. At each optimization step, \method samples in-context sequences and minimizes the difference between outputs with and without context, alongside the fine-tuning loss. This process ensures accurate incorporation of new knowledge, prevents overfitting and preserves the integrity of previously learned information.

We conduct extensive experiments on four datasets, obtaining promising results across four key dimensions: accuracy, locality, generalization, and linguistic quality. In addition, we evaluate \method’s performance without any adjustments in continual editing scenarios, where the model undergoes sequential updates, each time with a piece of new knowledge. The results demonstrate that \method outperforms baseline methods, effectively learning new knowledge while preserving model integrity.

The primary contributions of this paper are:
1) We introduce in-context editing (\method), a novel knowledge editing approach that learns toward a contextual distribution rather than a one-hot target, offering a more robust alternative to traditional fine-tuning.
2) We develop an optimization framework that refines the target contextual distribution using a gradient-based algorithm, enabling dynamic adaptation of the model to correctly incorporate new knowledge.
3) We provide empirical evidence demonstrating the effectiveness of \method, showcasing its potential for continual editing by seamlessly integrating new information while preserving the integrity of existing knowledge.

\section{Preliminaries and Related Work}
\subsection{Knowledge Editing: Problem Setup}
\label{sec:preliminary_setup}
The objective of knowledge editing is to incorporate new facts into a language model $\mathcal{M}_\theta$ through query-response pairs $\{(\vq_i, \vx_i^*)\}_{i\in[1, N]}$. Here $\vq$ is the query that triggers the retrieval of factual knowledge from $\mathcal{M}_\theta$, such as ``\textit{The president of the US is}", with $\vx^*$ being the intended response after editing, \eg, ``\textit{Donald Trump}". This integration is typically done by maximizing the probability $p_\theta(\vx^*|\vq)$. Conventionally, a knowledge editing algorithm is assessed across four key dimensions.

\noindent\textbf{Edit Success} measures the ability of the model to produce the edited response $\vx^*$ for a query $\vq$. 
Let $\mathcal{D}_e$ represent the query-response pairs and $\mathbf{1}[\cdot]$ be the indicator function, the metric is defined as:
\begin{equation*}
\text{succ} = \mathbb{E}_{(\vq, \vx^*) \sim \mathcal{D}_e}[\mathbf{1}[\argmax_{\vx} p_\theta(\vx|\vq) = \vx^*]].
\end{equation*}
\noindent\textbf{Portability}
assesses how well the model generalizes the knowledge for rephrased or logically related queries within the edit scope $\mathcal{D}_{q}$. 
For the aforementioned example, a related query could be "\textit{The first lady of the US is}", with the target being "\textit{Melania Trump}" instead of "\textit{Jill Biden}".
\begin{equation*}
\text{port} = \mathbb{E}_{(\vq, \vx^*) \sim \mathcal{D}_q \setminus \mathcal{D}_e }[\mathbf{1}[\argmax_{\vx} p_\theta(\vx|\vq) = \vx^*]].
\end{equation*}
\noindent\textbf{Locality}
evaluates if the model maintains original predictions for queries outside the edit scope:
\begin{equation*}
\text{loc} = \mathbb{E}_{(\vq, \vx^*) \sim \mathcal{D} \setminus \mathcal{D}_q}[\mathbf{1}[\argmax_{\vx} p_\theta(\vx|\vq) = \vx^*]].
\end{equation*}
\noindent\textbf{Fluency} 
estimates the linguistic quality of the post-edit model's output~\cite{zhang2018generating}, given by a weighted sum of bi- and tri-gram entropies, given $f_n$ as the n-gram distribution:
\begin{equation*}
\text{flu} = -\sum\nolimits_{n=2}^3 w_n \sum\nolimits_{\vx} f_n(\vx)\log f_n(\vx).
\end{equation*}

\subsection{Knowledge-Editing Approaches}
\paragraph{Weight Frozen} The first family of methods for knowledge editing keeps the original model frozen while leveraging external tools. Techniques proposed by \cite{mitchell2022memory, murty2022fixing, madaan2022memory, onoe2023can, zhong2023mquake, wang2024memoryllm, wang2024deepedit, jiang2024learning, chen2024robust, shi2024retrieval, wang2024wise} enhance the model's adaptability to new knowledge using external memory. Other approaches inject additional parameters into the model to incorporate new knowledge~\cite{dong2022calibrating, huang2022transformer, hartvigsen2024aging, raunak2022rank, yu2024melo}. Additionally, some methods attempt to embed the knowledge directly into prompts to generate post-edit responses, utilizing the model's in-context learning abilities~\cite{zheng2023can, cohen2024evaluating, li2023large}. However, these methods result in an ever-growing memory/model, which can become problematic over time as knowledge accumulates.

\paragraph{Weight Modified}
Another line of work, which our method focuses on, involves editing the model's weights to integrate new knowledge. Approaches include direct fine-tuning~\cite{zhu2020modifying, lee2022plug, ni2023forgetting}, meta-learning-driven approaches~\cite{tan2023massive, de2021editing, mitchell2021fast, sinitsin2019editable}, and targeted network editing~\cite{geva2020transformer, dai2021knowledge, li2024pmet, cheng2024editing, santurkar2021editing, xu2023language, tanno2022repairing, ilharco2022editing, gupta2023editing, hase2021language, hoelscher2023detecting, belrose2024leace, geva2022transformer, yu2023unlearning, wu2023depn, ma2023untying, gupta2024unified, wang2024detoxifying, sharma2024locating, rozner2024knowledge}. These methods aim to incorporate target knowledge and employ various techniques to ensure the locality of the edits. Techniques include constraining the gradient of parameters~\cite{zhu2020modifying}, adopting parameter-efficient approaches~\cite{yu2024melo}, and applying statistical constraints on the weights, with notable examples such as ROME~\cite{meng2022locating} and MEMIT~\cite{meng2022mass}. In this paper, we propose tuning the model towards a self-generated distribution instead of a one-hot target.

\paragraph{In-Context Learning}
In-context learning refers to the ability of language models to use the information provided in the input context to perform tasks without altering the model's parameters~\cite{brown2020language}. Previous research has applied contextual knowledge by prompting the model~\cite{zheng2023can, cohen2024evaluating}. To enhance models leveraging in-context learning, various strategies have been explored, such as distilling contextual knowledge~\cite{snell2022learning, huang2022context} and compressing the context into a gist token~\cite{mu2024learning}. However, \cite{snell2022learning} necessitates maintaining two copies of the model weights, while \cite{mu2024learning} requires the addition of an extra token to facilitate the injection of new knowledge. There are also methods tuning the model through meta-learning techniques~\cite{chen2022meta}, and examining the consistency between context and knowledge~\cite{li2023large, huang2024see}. However, these approaches do not modify the model weights.  In contrast, our method introduces a novel approach to utilize in-context learning by creating a learning target and framework for model editing, thereby providing an innovative way to integrate contextual information into the model's knowledge base.
\section{Methodology: Learning Knowledge From Contexts}
\label{sec:method}
We consider an auto-regressive generative language model $p_{\theta}$ parameterized by $\theta$, where $p_{\theta}(\vx) = p_{\theta}(x_{1:T})$ denotes the probability of a sequence $\vx = x_{1:T}$. The model factorizes the sequence into individual tokens $x_t$ and models the probability auto-regressively: $p_\theta(x_{1:T}) = p_\theta(x_1)\prod_{t=1}^{T-1} p_\theta(x_{t+1}|x_{1:t})$. Given new knowledge as a query-answer pair $(\vq, \vx^{*})$, our primary goal is to update the model parameters $\theta$ to maximize $p_{\theta}(\vx^{*} | \vq)$ while keeping the model's responses to unrelated queries unchanged.

\textbf{Running Example:} To illustrate our approach, suppose we want to update a language model to reflect that the current president of the United States is \textit{Donald Trump}, whereas it currently outputs \textit{Joe Biden} when queried with ``\textit{The president of the US is}''. The model needs to be updated so that it outputs \textit{Donald Trump} for this query, without affecting its performance on unrelated queries.

\subsection{Vanilla Fine-Tuning}
\label{sec:vanilla_ft}
A straightforward approach to editing a model's knowledge is fine-tuning, which involves minimizing the cross-entropy loss between the model's predictions and the target knowledge. This is equivalent to minimizing the Kullback-Leibler (KL) divergence between the \textbf{one-hot target distribution} $\delta_{\vx^{*}}(\vx)$ and the model's predicted distribution:
\begin{equation}
\mathcal{L}_{\text{FT}} = D_{\text{KL}}(\delta_{\vx^{*}}(\vx) \;||\; p_{\theta}(\vx | \vq)).
\end{equation}
In our running example, this means we fine-tune the model to assign maximum probability to the sequence ``\textit{Donald Trump}" given the query ``\textit{The president of the US is}". 
While this approach can effectively update the model’s response for a specific query, it has significant drawbacks. The use of a one-hot target distribution often leads to overfitting, causing the model to degrade, suffer from catastrophic forgetting, or even collapse, resulting in unnatural or repetitive outputs.

\subsection{Fine-Tuning with Sampling}
\label{sec:ft_with_sampling}
To try to address the above issues, we can use diverse and representative data distributions during fine-tuning to enhance the model's adaptability and generalization.
A potential strategy is to employ a softer distribution generated by the model itself in a bootstrapping manner, iteratively enhancing its performance. Unlike the hard one-hot distribution, this approach involves fine-tuning the model using its own sampled sequences, conditioned on the target $\vx^*$ of length $m$. Specifically, we can set the concatenation of each query and target  $[\vq, \vx^*]$ as the input, sample multiple sequences from the model itself, and use them as the fine-tuning targets:
\begin{equation}
\label{eq:ft_with_sampling}
\mathcal{L}_{\text{FT}}^{*} = D_{\text{KL}}(\delta_{\vx^{*}}(x_{1:m}) p_\theta(x_{>m} | [\vq, \vx^{*}]) \;||\; p_{\theta}(\vx | \vq)).
\end{equation}
However, as we show in Observation~\ref{obs:ft_with_sampling}, this approach does not alleviate the overfitting problem and is effectively equivalent to the vanilla fine-tuning method.
\begin{observation}
\label{obs:ft_with_sampling}
The objective of fine-tuning with samples is equivalent to the objective of traditional fine-tuning, i.e., $\mathcal{L}_{\text{FT}}^{*} = \mathcal{L}_{\text{FT}}$ (see \autoref{appendix:sec:proof_ft_with_sampling} for a proof).
\end{observation}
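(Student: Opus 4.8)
The plan is to show that both objectives collapse to the same scalar, the negative log-likelihood $-\log p_\theta(\vx^*|\vq)$. First I would read the target measure in $\mathcal{L}_{\text{FT}}^{*}$ correctly: the factor $\delta_{\vx^{*}}(x_{1:m})$ forces the first $m$ tokens of $\vx$ to equal $\vx^*$, while $p_\theta(x_{>m}|[\vq,\vx^*])$ supplies the law of the continuation. Thus, viewed as a distribution over full sequences $\vx = x_{1:T}$, the target is supported only on sequences of the form $[\vx^*, x_{>m}]$. Expanding the KL divergence as an expectation of a log-ratio, the outer sum over $\vx$ therefore collapses to a sum over continuations $x_{>m}$ weighted by $p_\theta(x_{>m}|[\vq,\vx^*])$.

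The crux is the auto-regressive factorization from the preliminaries, namely $p_\theta([\vx^*, x_{>m}]|\vq) = p_\theta(\vx^*|\vq)\, p_\theta(x_{>m}|[\vq,\vx^*])$. Substituting this into the denominator of the log-ratio, the continuation probability $p_\theta(x_{>m}|[\vq,\vx^*])$ appears identically in the numerator (from the target measure, i.e. the self-entropy term) and in the denominator (from the factorization), so the two pieces cancel. What remains inside the logarithm is $1/p_\theta(\vx^*|\vq)$, which is independent of $x_{>m}$; pulling it out of the expectation and using $\sum_{x_{>m}} p_\theta(x_{>m}|[\vq,\vx^*]) = 1$ yields $\mathcal{L}_{\text{FT}}^{*} = -\log p_\theta(\vx^*|\vq)$. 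I would then evaluate $\mathcal{L}_{\text{FT}}$ directly: since $\delta_{\vx^{*}}(\vx)$ is a point mass whose entropy vanishes, $D_{\text{KL}}(\delta_{\vx^{*}}(\vx)\,||\,p_\theta(\vx|\vq)) = -\log p_\theta(\vx^*|\vq)$ as well, so the two objectives coincide.

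I expect the main obstacle to be bookkeeping rather than conceptual: one must treat $\delta_{\vx^{*}}(x_{1:m})\,p_\theta(x_{>m}|[\vq,\vx^*])$ consistently as a \emph{normalized} distribution over the full sequence $\vx$, and apply the chain rule to exactly the split between the first $m$ tokens and the remainder. The one genuine point to verify is that the entropy of the sampled continuation cancels cleanly against the matching conditional-likelihood term; this cancellation is precisely what makes sampling tails from the model inert, since no information about the sampled continuations survives in the objective beyond the trivial normalization constant.
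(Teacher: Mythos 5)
Your proposal is correct and follows essentially the same route as the paper's proof: both collapse the one-hot factor to substitute $x_{1:m}\to\vx^*$, apply the auto-regressive chain rule to factor $p_\theta(x_{1:T}|\vq)$, and observe that the continuation law $p_\theta(x_{>m}|[\vq,\vx^*])$ is identical on both sides so that only $-\log p_\theta(\vx^*|\vq)$ survives. The paper phrases the final step as a vanishing KL divergence of a distribution against itself, while you cancel the matching factors inside the logarithm before summing — the same computation in a slightly different order.
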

This implies that the model cannot learn and improve on its own \textbf{without external inputs}, highlighting the necessity for our method, which will be introduced in the following sections.

\subsection{In-Context Tuning with Sampling}
\label{sec:in_context_tuning}
To address the ineffectiveness of the naive sampling approach in \autoref{sec:ft_with_sampling}, we introduce \textit{extra information that guides the model towards a new distribution that aligns with the target, while maintaining similarity to its original distribution}. Specifically, we leverage the in-context learning capabilities of language models by prepending context prompts $\vc$ to the queries $\vq$, where $\vc$ is the new knowledge to be learned.
For our example, we can create a context such as ``\textit{Donald Trump is the current president of the United States.}", and prepend this context to the query.
This induces a new contextual distribution $p_{\theta_0}(\vx | [\vc, \vq])$ that incorporates the desired knowledge through the context, while keeping minimal changes to the model. We can define the loss function as:
\begin{equation}
  \mathcal{L}_{\text{sample}} = D_{\text{KL}} ( p_{\theta_0}(\vx | [\vc, \vq]) \;||\; p_{\theta}(\vx | \vq) ),
\end{equation}
where $\theta_0$ are the initial parameters of the model, and $p_{\theta}(\vx | \vq)$ is the updated model's distribution. In this formulation, there is no explicit target sequence $\vx^{*}$; instead, the desired information is implicitly conveyed through the context $\vc$. The effectiveness of this method relies on the relevance of the context and the model's ability to utilize it effectively.

\subsection{Consistent In-Context Editing (ICE)}
\label{sec:consistent_in_context_editing}

While the loss $\mathcal{L}_{\text{sample}}$ introduces context, it does not guarantee that the model will produce accurate responses, as the initial distribution $p_{\theta_0}(\vx | [\vc, \vq])$ may not reflect the correct target due to limitations in the model's ability to follow the context. Therefore, we propose to refine the target contextual distribution in a way that ensures the model internalizes the new knowledge.

We introduce a consistency condition that the updated model parameters $\theta$ should satisfy:
\begin{equation}
\label{eq:consistency_condition}
p_{\theta}(\vx | [\vc, \vq]) = p_{\theta}(\vx | \vq).
\end{equation}
This condition implies that, after updating, the model's predictions should be the same whether or not the context $\vc$ is provided, indicating that the knowledge from $\vc$ has been internalized. To enforce this, we define the \textit{in-context editing loss} $\mathcal{L}_{\text{ICE}}$ as:
\begin{equation}
\label{eq:loss_ice}
\mathcal{L}_{\text{ICE}} = D_{\text{KL}} ( p_{\theta}(\vx | [\vc, \vq]) \;||\; p_{\theta}(\vx | \vq)).
\end{equation}
To ensure the model produces the correct target sequence $\vx^{*}$, we also include the fine-tuning loss:
\begin{equation}
\label{eq:loss_total}
\mathcal{L} = \mathcal{L}_{\text{FT}} + \lambda \mathcal{L}_{\text{ICE}},
\end{equation}
where $\lambda$ is a hyperparameter balancing the two loss terms.

\textbf{Optimizing $\mathcal{L}_{\text{ICE}}$:} The in-context editing loss $\mathcal{L}_{\text{ICE}}$ involves two distributions that depend on the model parameters $\theta$, making direct optimization challenging. 
Directly propagating the loss through both distributions is not desirable, as we aim for a uni-directional optimization: we do not intend to draw $p_{\theta}(\vx | [\vc, \vq])$ towards $p_{\theta}(\vx | \vq)$.
To address this, we adopt an iterative, gradient-based approach. At each optimization step $s$, we treat $p_{\theta_s}(\vx | [\vc, \vq])$ as a fixed target distribution (using the current parameters $\theta_s$) and update the model parameters to minimize the divergence to $p_{\theta_{s+1}}(\vx | \vq)$. This process is formalized as:
\begin{equation}
\label{eq:ice_optimization}
\theta_{s+1}^{*} = \argmin_{\theta_{s+1}} \mathcal{L}_{\text{ICE}}^{(s)} = \argmin_{\theta_{s+1}} D_{\text{KL}} ( p_{\theta_s}(\vx | [\vc, \vq]) \;||\; p_{\theta_{s+1}}(\vx | \vq)).
\end{equation}
By iteratively updating $\theta$, we ensure that the model's predictions with and without the context converge, satisfying the consistency condition in \autoref{eq:consistency_condition}.

\textbf{Optimizing the Combined Loss $\mathcal{L}$:} To optimize the total loss as defined in \autoref{eq:loss_total}, we sample sequences $\vx_{\vc}$ from the model conditioned on $[\vc, \vq, \vx^{*}]$ and maximize the likelihood of the combined sequence $[\vx^{*}, \vx_{\vc}]$. 
This process is equivalent to optimizing the combined loss $\mathcal{L}$. If the sampling is not conditioned on the target, we would be solely optimizing $L_{\text{ICE}}$. This approach is algorithmically convenient, and we provide a proof of this equivalence in \autoref{appendix:sec:combined_loss}.
To prevent the model from drifting too far from the initial parameters (thus preserving unrelated knowledge), we employ gradient clipping techniques inspired by constrained fine-tuning methods~\cite{zhu2020modifying}. The detailed algorithm is presented in \autoref{alg:ice}.

\textbf{Context Generation:} The context $\vc$ can be generated automatically by extracting or synthesizing relevant information related to the target knowledge. In practice, this can be achieved using language models or APIs to generate summaries or statements that convey the new information. In our experiments, we used GPT-4 to create effective contexts (details provided in \autoref{appendix:context_generation}).

\subsection{Discussion}
\label{sec:discussion}

Our method aims to achieve several objectives simultaneously. The \textbf{accuracy} of our method is ensured by the fine-tuning loss $\mathcal{L}_{\text{FT}}$, which requires the model produces the correct target output $\vx^{*}$ for the query $\vq$.
The \textbf{linguistic quality} is maintained by the in-context editing loss $\mathcal{L}_{\text{ICE}}$, which encourages the model to align its output distribution with a broader, context-induced distribution, helping prevent overfitting and maintaining the naturalness and diversity of the generated text.

To understand how our method maintains \textbf{locality} (i.e., minimal impact on unrelated queries) and promotes \textbf{generalization}, we consider all possible query-response pairs $(\vq, \vx)$ and partition them into two sets: those related to the target knowledge ($\mathcal{D}_q$) and those unrelated ($\mathcal{D}_{\neg q}$). The in-context editing loss can be decomposed as:
\begin{equation}
  \scalebox{0.9}{$
  \begin{aligned}
  \mathcal{L}_{\text{ICE}} &= D_{\text{KL}}( p_{\theta}(\vx | [\vc, \vq]) \;||\; p_{\theta}(\vx | \vq)) \\
  &= \sum_{(\vq, \vx) \sim \mathcal{D}_q \cup \mathcal{D}_{\neg q}} p_{\theta}(\vx | [\vc, \vq]) \log\left(\frac{ p_{\theta}(\vx | [\vc, \vq])}{p_{\theta}(\vx | \vq)}\right).
  \end{aligned}
  $}
\end{equation}
For queries unrelated to the target knowledge (i.e., $\vq \in \mathcal{D}_{\neg q}$), the context $\vc$ should have minimal effect, so the loss encourages the model to keep its original responses, ensuring locality. For related queries ($\vq \in \mathcal{D}_q$), the loss promotes generalization, as the model learns to apply the new knowledge to various relevant contexts. Thus the effectiveness of our method relies on several assumptions:

\textit{The context is related to the knowledge}. The context provided in the prompts must be pertinent and relevant to the knowledge needed for generating accurate and coherent responses. This relevance ensures that the additional information introduced through the context is meaningful and enhances the model's understanding of the query.
    
\textit{The model attends to the context}. The model must be capable of attending to and incorporating the contextual information provided in the prompts. During the fine-tuning process, the model effectively uses the context as part of its input, influencing its predictions and overall performance.
    
\textit{The model generalizes from the context to related knowledge}. Given the relevant context, the model should be able to generalize from the specific information in the context to broader or related knowledge areas. This generalization enables the model to generate responses that are not only contextually coherent but also enriched with additional details inferred from the context. Techniques like chain-of-thought ~\cite{wei2022chain} can potentially be employed in the context prompt to enhance the model's generalization capability.

The computational demands of our pipeline can be heavier than vanilla fine-tuning, as it involves multiple sampling steps and depends on GPT-4 for context generation. However, the computational burden may not be as substantial as it appears: 1) Since sampling only necessitates a forward pass of the model, the computational cost is significantly lower than that of training the model. 2) We are considering scenarios with very limited training data, as is the case in the knowledge editing task.

\section{Experiments}
\subsection{Experiment Settings}

\paragraph{Datasets and Model} 
We evaluate the performance of \method with four datasets from KnowEdit~\cite{zhang2024comprehensive}, which are commonly used for knowledge insertion and modification. Detailed statistics on the selected datasets can be seen in \autoref{tab:dataset}. Among the datasets, the WikiBio dataset does not include related hopping question data necessary for evaluating the portability metric. To ensure a fair comparison, we use Llama2-7b-chat, which is the same model as used in the original survey~\cite{zhang2024comprehensive}.

\paragraph{Metrics} 
We use the metrics from \autoref{sec:preliminary_setup} but note one limitation in not penalizing semantically meaningless sentences or repetitive long patterns (\autoref{appendix:sec:metrics}). Hence we add perplexity as an additional measure, which measures how well the pre-trained model predicts the generated outputs from the fine-tuned models. Assuming the original model is well-trained, the perplexity score reflects the language quality of the fine-tuned model and how far it has drifted. In our case, perplexity can also increase due to the novelty of edited knowledge, so we introduce a normalized perplexity ratio $\text{PPL}_{r}$ to address this (\autoref{appendix:sec:metrics}). The ratio compares the perplexity of the generation post-target token to that of the combined prompt and target token.

\paragraph{Methods}
We use 4 representative tuning methods for comprehensive comparisons. 
ROME~\cite{meng2022locating} and MEMIT~\cite{meng2022mass} employ a causal method to locate and edit only the related parameters to improve the locality. The other two methods FT-L~\cite{meng2022locating} and FT-M~\cite{zhang2024comprehensive} fine-tunes specific layers of the feed-forward network to maximize the probability of all tokens in the target sequence. In the survey~\cite{zhang2024comprehensive}, the FT-M model demonstrated nearly the best performance. 

\paragraph{Implementation details} The contexts $\vc$ are given by GPT-4 by summarizing the target knowledge. For layer updates,  ROME updates one layer for GPT2 with layer 17 and Llama2 with layer 5. For both \method and other baselines (FT-M and FT-L), five layers are updated following MEMIT~\cite{meng2022mass}, for GPT2 with layers {13,14,15,16,17} and Llama2 with layers {4,5,6,7,8}. Results of GPT2 can be found at \autoref{appendix:sec:more_main_results}. We follow the usage of other parameters in ROME and MEMIT which have been found to provide the best performance. For FT-M, FT-L, and \method, the optimization proceeds for a maximum of 25 steps with a learning rate of $7e-4$ and 0 weight decay. For all results except the ablation study, we used \(\lambda = 1.0\) for \method without deliberate tuning.

\begin{table}[t]
  \centering
  \caption{Statistics on the evaluation datasets.}
  \label{tab:dataset}
  \vspace{\tablecaptionspace}
  \resizebox{0.7\linewidth}{!}{ 
  \begin{tabular}{lcccc}
  \toprule
  \multirow{2}{*}{} & \multicolumn{1}{c}{\textbf{Knowledge Insertion}} & \multicolumn{3}{c}{\textbf{Knowledge Modification}} \\
  \cmidrule(lr){2-2} \cmidrule(lr){3-5}
  & \textbf{WikiData$_{recent}$} & \textbf{ZsRE} & \textbf{WikiBio} & \textbf{ WikiData$_{counterfact}$} \\
  \midrule
  Type & Fact  & QA & Hallucination & Counterfact  \\
  Train & 570 & 10,000 & 592 & 1,455 \\
  Test & 1,266 & 1230 & 1,392 & 885 \\
  \bottomrule
  \end{tabular}}
  \vspace{\tablebottomspace}
\end{table}

\begin{table*}[tt]
  \centering
  \caption{Main results on knowledge insertion and question-answering datasets of Llama2-7b-chat.}
  \label{tab:main_results_ki_qa_llama2}
  \vspace{\tablecaptionspace}
  \resizebox{\textwidth}{!}{
  \begin{tabular}{ccccccccccc}
  \toprule
  & \multicolumn{5}{c}{\textbf{WikiData$_{recent}$}} & \multicolumn{5}{c}{\textbf{ZsRE}} \\
  \cmidrule(lr){2-6} \cmidrule(lr){7-11}
   & \textbf{Edit Succ. $\uparrow$} & \textbf{Portability $\uparrow$} & \textbf{Locality $\uparrow$} & \textbf{Fluency $\uparrow$} & $\textbf{PPL}_{r}$ $\downarrow$ & \textbf{Edit Succ. $\uparrow$} & \textbf{Portability $\uparrow$} & \textbf{Locality $\uparrow$} & \textbf{Fluency $\uparrow$} & $\textbf{PPL}_{r}$ $\downarrow$ \\
  \midrule
  \textbf{ROME} & 97.25 & 36.58 & 30.40 & 581.00 & 107.47 & 96.66 & 52.90 & 26.61 & \underline{573.02} & 53.88 \\
  \textbf{MEMIT} & 97.03 & 37.00 & 29.28 & 573.06 & 87.17 & 95.61 & 52.73 & 24.79 & 563.42 & 38.67 \\
  \textbf{FT-L} & 45.63 & 34.73 & 34.80 & 558.91 & \underline{68.92} & 43.60 & 43.90 & 51.38 & 560.94 & 30.36 \\
  \textbf{FT-M} & \textbf{100.00} & \underline{59.28} & \underline{41.54} & \textbf{587.17} & 70.64 & \textbf{100.00} & \underline{54.47} & \underline{53.84} & \textbf{580.10} & \underline{27.33} \\
  \midrule
  \textbf{\method} & \textbf{100.00} & \textbf{61.02} & \textbf{46.39} & \underline{585.58} & \textbf{34.08} & \textbf{100.00} & \textbf{55.52} & \textbf{56.97} & 562.70 & \textbf{15.50} \\
  \bottomrule
  \end{tabular}
  }
  \vspace{\tablebottomspace}
  \end{table*}

  \begin{table*}[t]
  \centering
  \caption{Main results on knowledge modification datasets of Llama2-7b-chat.}
  \label{tab:main_results_km_llama2}
  \vspace{\tablecaptionspace}
  \resizebox{0.95\textwidth}{!}{
  \begin{tabular}{ccccccccccc}
  \toprule
  & \multicolumn{4}{c}{\textbf{WikiBio}} & \multicolumn{5}{c}{\textbf{WikiData}$_{counterfact}$} \\
  \cmidrule(lr){2-5} \cmidrule(lr){6-10}
   & \textbf{Edit Succ. $\uparrow$} & \textbf{Locality $\uparrow$} & \textbf{Fluency $\uparrow$} & $\textbf{PPL}_{r}$ $\downarrow$ & \textbf{Edit Succ. $\uparrow$} & \textbf{Portability $\uparrow$} & \textbf{Locality $\uparrow$} & \textbf{Fluency $\uparrow$} & $\textbf{PPL}_{r}$ $\downarrow$ \\
  \midrule
  \textbf{ROME} & 95.83 & 68.38 & 617.67 & 3.70 & 98.68 & 42.45 & 21.13 & \underline{585.40} & 109.97 \\
  \textbf{MEMIT} & 94.54 & \underline{69.96} & 616.65 & 3.51 & 98.13 & 44.16 & 19.48 & 576.26 & 122.48 \\
  \textbf{FT-L} & 59.41 & 28.94 & 615.50 & \textbf{1.89} & 36.13 & 29.37 & 38.37 & 566.55 & 89.24 \\
  \textbf{FT-M} & \textbf{100.00} & 35.34 & \textbf{618.12} & 3.67 & \textbf{100.00} & \underline{72.39} & \underline{40.76} & \textbf{586.80} & \underline{54.71} \\
  \midrule
  \textbf{\method} & \underline{99.88} & \textbf{70.60} & \underline{617.88} & \underline{2.15} & \textbf{100.00} & \textbf{73.49} & \textbf{45.88} & 583.29 & \textbf{18.95} \\
  \bottomrule
  \end{tabular}
  }
\end{table*}

\subsection{Main Results}
\autoref{tab:main_results_ki_qa_llama2} and \autoref{tab:main_results_km_llama2} show the main performance metrics of \method. Notably, the FT-M method remains the strongest baseline, as corroborated by the findings in~\cite{zhang2024comprehensive}. As seen in the results, \method demonstrates outstanding performance on the measures. 

\paragraph{Accuracy} \method consistently achieves nearly perfect edit accuracy across all datasets, outperforming most baselines and matching the performance of the strongest baseline FT-M.

\paragraph{Locality and portability} As accuracy increases, the locality tends to decrease due to the inherent perturbations introduced. Furthermore, there tends to be an inverse relationship between model locality and portability; locality implies minimal model changes, whereas portability necessitates the model’s ability to generalize to related knowledge. Despite this trend, \method not only achieves a near-perfect accuracy comparable to FT-M but also consistently outperforms baseline methods in terms of locality and portability, aligning with the analysis presented in \autoref{sec:discussion}. While matching the near-perfect accuracy with FT-M, \method demonstrates consistently better locality and portability than the baseline methods, matching our expectation discussed in \autoref{sec:discussion}. Compared to ROME, MEMIT, and FT-T, \method shows approximately 30\% higher portability on the WikiData$_{counterfact}$ and WikiData$_{recent}$ datasets. This discrepancy highlights that by leveraging in-context learning to adapt to a contextual distribution, \method achieves better generalization. Additionally, \method performs over 15\% better in terms of locality on both datasets, preserving unrelated knowledge by enhancing the robustness of gradient-based tuning. A minor performance degradation of 99.88\% is observed on the WikiBio dataset. This could be attributed to the diversity across datasets, which can introduce slight variations in performance within an acceptable margin.

\paragraph{Fluency and PPL$_{r}$} To evaluate the linguistic quality, we computed fluency and perplexity. \method demonstrates reasonably good fluency, frequently ranking among the top performers. While other methods might show slightly higher fluency in single edits, \method achieves significantly higher fluency in the continual editing case (\autoref{sec:continual_edit}). Moreover, \method consistently exhibits lower perplexity, signaling better and more natural language model performance. It maintains robust performance across all metrics when editing new knowledge while preserving the integrity of existing information.

\begin{table*}[ht]
  \centering
  \caption{Ablation results. The second row is the closest to fine-tuning (\autoref{sec:vanilla_ft} and \autoref{sec:ft_with_sampling}).}
  \label{tab:ablation_llama2}
  \vspace{\tablecaptionspace}
  \resizebox{\textwidth}{!}{
  \begin{tabular}{cccccccccccc}
  \toprule
   & & \multicolumn{5}{c}{\textbf{ZsRE}} & \multicolumn{5}{c}{$\textbf{WikiData}_{counterfact}$} \\
  \cmidrule(lr){3-7} \cmidrule(lr){8-12}
  \textit{Dynamic} & \textit{Context} & \textbf{Edit succ.}$\uparrow$ & \textbf{Portability} $\uparrow$ & \textbf{Locality} $\uparrow$ & \textbf{Fluency} $\uparrow$ & $\textbf{PPL}_{r}$ $\downarrow$ & \textbf{Edit succ.} $\uparrow$ & \textbf{Portability} $\uparrow$ & \textbf{Locality} $\uparrow$ & \textbf{Fluency}$\uparrow$ & $\textbf{PPL}_{r}$ $\downarrow$ \\
  \midrule
  $\checkmark$ & $\checkmark$ & \textbf{100.00} & \textbf{55.52} & 56.97 & 562.70 & 15.50 & \textbf{100.00} & \textbf{73.49} & 45.88 & 583.29 & \textbf{8.92} \\
  $\checkmark$ & $\times$ & 99.60 & 45.95 & 55.40 & 544.55 & \textbf{6.90} & 99.66 & 67.34 & 44.42 & 568.98 & 12.50 \\
  $\times$ & $\checkmark$ & 99.94 & 53.27 & 62.90 & 573.97 & 26.39 & \textbf{100.00} & 70.14 & 50.05 & \textbf{589.97} & 31.71 \\
  $\times$ & $\times$ & 99.94 & 53.84 & \textbf{65.64} & \textbf{578.97} & 25.71 & 99.93 & 69.93 & \textbf{55.12} & 589.04 & 35.70 \\
  \bottomrule
  \end{tabular}}
\end{table*}

\begin{table*}[ht]
  \centering
  \caption{Ablation results for different values of $\lambda$.}
  \label{tab:ablation_llama2_lambda}
  \vspace{\tablecaptionspace}
  \resizebox{\textwidth}{!}{
  \begin{tabular}{ccccccccccc}
  \toprule
   & \multicolumn{5}{c}{\textbf{ZsRE}} & \multicolumn{5}{c}{$\textbf{WikiData}_{recent}$} \\
  \cmidrule(lr){2-6} \cmidrule(lr){7-11}
  $\lambda$ & \textbf{Edit succ.}$\uparrow$ & \textbf{Portability} $\uparrow$ & \textbf{Locality} $\uparrow$ & \textbf{Fluency} $\uparrow$ & $\textbf{PPL}_{r}$ $\downarrow$ & \textbf{Edit succ.} $\uparrow$ & \textbf{Portability} $\uparrow$ & \textbf{Locality} $\uparrow$ & \textbf{Fluency}$\uparrow$ & $\textbf{PPL}_{r}$ $\downarrow$ \\
  \midrule
  0.6 & 99.71 & 50.65 & \textbf{59.54} & \textbf{584.84} & 561.29 & 99.93 & 58.28 & 46.93 & 589.77 & 179.4 \\
  0.8 & 99.81 & 51.59 & 58.82 & 582.91 & 281.14 & 99.95 & 59.12 & 47.36 & 591.84 & 142.03 \\
  1.0 & \textbf{100.00} & \textbf{55.52} & 56.97 & 562.70 & \textbf{15.50} & \textbf{100.00} & \textbf{61.02} & 46.39 & 585.58 & \textbf{34.08} \\
  1.2 & 99.87 & 52.08 & 58.54 & 581.04 & 324.47 & 99.98 & 59.69 & \textbf{47.51} & 589.49 & 280.32 \\
  1.4 & 99.90 & 51.91 & 58.18 & 584.17 & 287.25 & \textbf{100.00} & 59.96 & 46.53 & \textbf{591.98} & 154.77 \\
  \bottomrule
  \end{tabular}}
\end{table*}

\subsection{Ablation Studies}
We examine two important dimensions of \method through our ablation experiments in \autoref{tab:ablation_llama2}.

\textbf{Firstly}, we analyze the impact of using a dynamic training target. Specifically, we investigate whether sequences are generated from the original model throughout training or from a modified model. In other words, in the first variant of our algorithm, the target distribution \( p_{\theta_s}(\vx|[\vc,\vq]) \) in \autoref{eq:ice_optimization} remains static during optimization, meaning the weight of the with-context target distribution does not change, i.e., \( p_{\theta_s}(\vx|[\vc,\vq]) = p_{\theta_0}(\vx|[\vc,\vq]) \) for \( s \geq 0 \). Notably, \method with static targets is equivalent to combining \(\mathcal{L}_{sample}\) and \(\mathcal{L}_{ft}\).
\textbf{Secondly}, we consider an ablation where sequences are sampled without prepended context, i.e., sampling from \( p_{\theta}(\vx|\vq) \) instead of \( p_{\theta}(\vx|[\vc,\vq]) \).

In this ablation, the model that is closest to fine-tuning with sampling and thus vanilla fine-tuning (\autoref{sec:ft_with_sampling} and \autoref{sec:vanilla_ft}) is the one with a dynamic target but sampling sequences without context (the second row in \autoref{tab:ablation_llama2}). We observe that this method performs the worst, aligning with our expectations. Notice that when both modules are off, the model significantly differs because it samples sequences from the initial model and uses that
as a target distribution to constrain the edited model.


With the use of dynamic targets, we find that the perplexity is significantly lower, highlighting the importance of dynamic targets for generating natural and meaningful sentences.
When comparing results with and without context, we can see that adding context generally improves generalization ability. These ablation results confirm the importance of both dynamic training targets and the inclusion of contextual information in \method.

Furthermore, we examine the influence of the hyperparameter \(\lambda\) as detailed in \autoref{eq:ice_optimization}. The results presented in \autoref{tab:ablation_llama2_lambda} indicate that simply setting \(\lambda\) to 1.0 yields the optimal performance for the model, which corresponds to directly maximizing the likelihood of the combined sequence of the target and the sampled sequence.

\begin{table*}[t!]
  \caption{Continual editing results of Llama2-7b-chat.}
  \label{tab:continual_editing}
  \centering
  \hfill
  \begin{minipage}[t]{0.49\textwidth}
  \vspace{0pt}
  \resizebox{\textwidth}{!}{
  \begin{tabular}{lccccccccccc}
  \toprule
  \textbf{DataSet} & \textbf{Metric} & \textbf{MEMIT}  & \textbf{ROME} & \textbf{FT-L} & \textbf{FT-M} & \textbf{\method} \\
  \midrule
  \multirow{5}{*}{\textbf{Wiki$_{recent}$}} 
   & Edit succ. $\uparrow$  & 14.20 & 17.42  & 44.55 & \textbf{100.00} & \textbf{100.00} \\
   & Portability $\uparrow$ & 4.06 & 6.46 & 23.93 & 58.30 & \textbf{59.27} \\
   & Locality $\uparrow$ & 2.25 & 4.12 & 11.38 & 35.59 & \textbf{38.33} \\
   & Fluency $\uparrow$ & 377.58 & 336.10 & 425.54 & 487.52 & \textbf{631.00} \\
   & $\text{PPL}_{r}$ $\downarrow$ & 22.57 & 7.58 & 0.30 & 11.58 & \textbf{0.10} \\
  \midrule
  \multirow{5}{*}{\textbf{ZsRE}} 
   & Edit succ. $\uparrow$  & 31.07 & 13.69 & 39.72 & \textbf{100.00} & \textbf{100.00} \\
   & Portability $\uparrow$ & 5.59 & 5.96 & 13.53 & \textbf{53.40} & 50.97 \\
   & Locality $\uparrow$ & 2.13 & 2.96 & 6.27 & \textbf{34.15} & 27.01 \\
   & Fluency $\uparrow$ & 509.36 & 313.28 & 464.30 & 490.79 & \textbf{602.53} \\
   & $\text{PPL}_{r}$ $\downarrow$ & 14.44 & 3.43 & 0.34 & 6.93 & \textbf{0.07} \\
  \bottomrule
  \end{tabular}
  }
  \end{minipage}
  \hfill
  \begin{minipage}[t]{0.49\textwidth}
  \vspace{0pt}
  \resizebox{\textwidth}{!}{
  \begin{tabular}{lccccccccccc}
  \toprule
  \textbf{DataSet} & \textbf{Metric} & \textbf{MEMIT}  & \textbf{ROME} & \textbf{FT-L} & \textbf{FT-M} & \textbf{\method} \\
  \midrule
  \multirow{5}{*}{\textbf{Wiki$_{cf}$}} 
   & Edit succ. $\uparrow$  & 12.10 & 9.43 & 14.28 &  \textbf{100.00} & 99.98 \\
   & Portability $\uparrow$ & 4.53 & 4.50 & 6.94 & 72.55 &  \textbf{73.74} \\
   & Locality $\uparrow$ & 0.78 & 1.34 & 1.01 & 24.99 &  \textbf{27.37} \\
   & Fluency $\uparrow$ & 416.77 & 294.67 & 472.37 & 514.86 & \textbf{599.57} \\
   & $\text{PPL}_{r}$ $\downarrow$ & 7.71 & 6.12 & \textbf{0.10} & 10.74 &  \textbf{0.10} \\
  \midrule
  \multirow{4}{*}{\textbf{WikiBio}} 
   & Edit succ. $\uparrow$  & 26.49 & 8.31 & 38.02 & \textbf{99.09} & \textbf{99.09} \\
   & Locality $\uparrow$ & 3.73 & 4.34 & 13.20 & 29.40 & \textbf{30.17} \\
   & Fluency $\uparrow$ & 599.40 & 497.42 & 595.31 & \textbf{617.90} & 612.66 \\
   & $\text{PPL}_{r}$ $\downarrow$ & 586.35 & 1.12 & \textbf{1.07} & 2.43 & 1.95 \\
  \bottomrule
  \end{tabular}
  }
  \end{minipage}
  \hfill
  \end{table*}

\subsection{Continual Editing}
\label{sec:continual_edit}
We also evaluate the model's ability to maintain its integrity. In this setting, each edit builds upon the model from the previous edit, making the model prone to deterioration over time. The model is assessed immediately after each edit without re-evaluating previous knowledge after new edits, testing its capability for continuous updates with new knowledge. 

\autoref{fig:continual_editing} illustrates the model’s performance during continual editing. Most baseline methods (e.g., MEMIT, ROME, FT-L) experience significant deterioration in both accuracy and general performance over time. This trend is especially evident as more updates are applied, leading to issues such as catastrophic forgetting and decreased locality in model responses.

\autoref{tab:continual_editing} presents the results of \method across all four datasets. It demonstrates that \method maintains high accuracy and low perplexity after processing the entire dataset. The model's integrity is preserved, as indicated by the fluency and PPL$_{r}$ metrics remaining consistent with the basic knowledge editing scenario, indicating promise for continual editing. Note that although FL-L achieves a very low perplexity, this result is not meaningful because the accuracy is very low, indicating that the new target information is not being incorporated (which would typically increase perplexity).

\begin{figure}[t!]
  \centering
  \includegraphics[width=\linewidth]{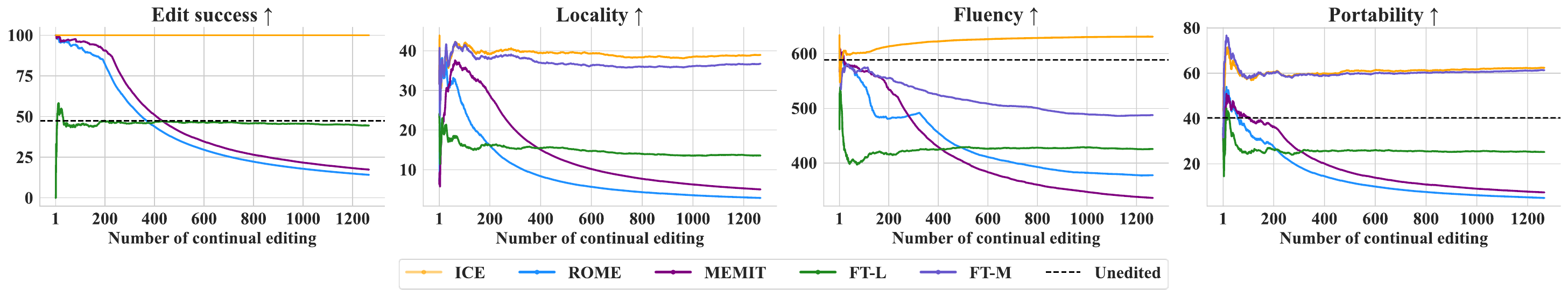}
  \caption{Continual editing with Llama2-7b-chat on \textbf{WikiData}$_{recent}$. Each edit builds on the previous model, risking deterioration over time. The model is assessed immediately after each edit without re-evaluating previous edits, testing its ability to update continuously. While most methods deteriorate, sometimes performing worse than the unedited version, our method, \method, maintains integrity and achieves promising performance.}
  \label{fig:continual_editing}
\end{figure}

\begin{figure*}[ht]
  \centering
   \includegraphics[width=\linewidth]{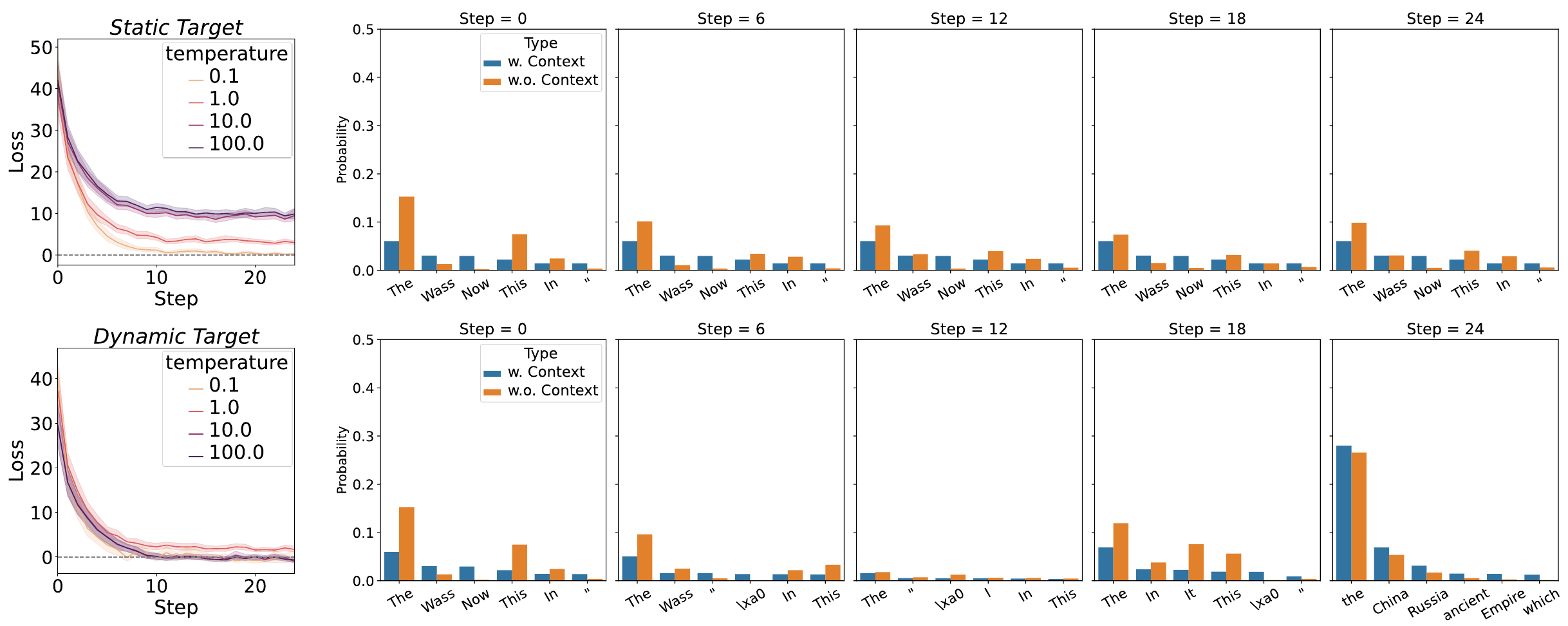}
   \caption{Comparison of \method with static and dynamic targets on an example, where the query is ``\textit{The name of the country which Academy Award for Best Picture is associated with is?}" and the target is ``\textit{Wassoulou Empire}". The line plots on the left illustrate the loss over optimization steps for static (top) and dynamic (bottom) targets under temperatures from 0.1 to 100. The figures on the right show how the probabilities of the top-6 predicted tokens for $x_2$, the second token following the target, change with iteration steps.  The tokens are arranged from left to right in descending order of probability without context. At early steps, the token \textit{``Wass"} appears due to its presence as the initial token in the target $\vx^*$. At later steps, the probability of \textit{``Wass"} in dynamic targets (top) significantly declines, indicating successful adaptation and suppression of repetitive token predictions. In contrast, for static targets (bottom), the probability of \text{``Wass"} remains relatively high throughout the optimization steps.}
   \label{fig:convergence}
   \vspace{-10pt}
 \end{figure*}

\subsection{Convergence}
As the target distribution dynamically evolves during optimization, ensuring the convergence of \autoref{alg:ice} is crucial. Another consideration is how \method differs from combining in-context sampling \(\mathcal{L}_{sample}\) and fine-tuning \(\mathcal{L}_{ft}\). To investigate this, we further examine the static target ablation.


The left side of \autoref{fig:convergence} presents the loss curves over optimization steps for a range of temperatures. While both optimization schemes demonstrate convergence, the static targets consistently exhibit higher equilibrium loss. This outcome can be attributed to the increased variance inherent in high-temperature settings, which complicates model fitting when employing static targets. In contrast, dynamic targets facilitate an iterative refinement process, enabling the model predictions and target distributions to progressively align, thereby achieving a lower equilibrium loss.

The right side of \autoref{fig:convergence} provides further insights through an example where dynamic targets foster a more effective adaptive adjustment of token predictions compared to static targets. Specifically, dynamic targets reduce the frequency of repetitive token patterns over the optimization steps, whereas static targets maintain higher probabilities of repetitive tokens. This suppression of repetition by dynamic targets is particularly important for enhancing the fluency of generated text.

\section{Conclusion}
\vspace{-10pt}
This paper introduces In-Context Editing (\method), a novel approach that addresses the brittleness of traditional fine-tuning in knowledge editing by targeting a contextual distribution instead of a one-hot target. \method enhances gradient-based tuning for knowledge editing and excels in accuracy, locality, generalization, and linguistic quality. Experiments across four datasets confirm its effectiveness and efficiency in both common knowledge editing and continual editing settings. Overall, \method offers a fresh perspective and a straightforward framework for knowledge editing of language models.

\section*{Limitations}
While the use of alternative models for context generation is optional, we employ them to enhance the training process with additional information. However, if the context generation model (e.g., GPT-4) produces hallucinated outputs, it may provide inaccurate contexts, which could hinder the optimization process and lead to further hallucinations. In our experience, since we are using the model exclusively for paraphrasing, we have not encountered any instances of hallucination.

\section*{Acknowledgement}
This work is supported by the Opening Project of the State Key Laboratory of General Artificial Intelligence (Project No:SKLAGI20240P11).





\bibliography{main}

\begin{thebibliography}{59}
\providecommand{\natexlab}[1]{#1}
\providecommand{\url}[1]{\texttt{#1}}
\expandafter\ifx\csname urlstyle\endcsname\relax
  \providecommand{\doi}[1]{doi: #1}\else
  \providecommand{\doi}{doi: \begingroup \urlstyle{rm}\Url}\fi

\bibitem[Belrose et~al.(2024)Belrose, Schneider-Joseph, Ravfogel, Cotterell, Raff, and Biderman]{belrose2024leace}
Nora Belrose, David Schneider-Joseph, Shauli Ravfogel, Ryan Cotterell, Edward Raff, and Stella Biderman.
\newblock Leace: Perfect linear concept erasure in closed form.
\newblock \emph{Advances in Neural Information Processing Systems}, 36, 2024.

\bibitem[Brown et~al.(2020)Brown, Mann, Ryder, Subbiah, Kaplan, Dhariwal, Neelakantan, Shyam, Sastry, Askell, et~al.]{brown2020language}
Tom Brown, Benjamin Mann, Nick Ryder, Melanie Subbiah, Jared~D Kaplan, Prafulla Dhariwal, Arvind Neelakantan, Pranav Shyam, Girish Sastry, Amanda Askell, et~al.
\newblock Language models are few-shot learners.
\newblock \emph{Advances in neural information processing systems}, 33:\penalty0 1877--1901, 2020.

\bibitem[Chen et~al.(2022)Chen, Zhong, Zha, Karypis, and He]{chen2022meta}
Yanda Chen, Ruiqi Zhong, Sheng Zha, George Karypis, and He~He.
\newblock Meta-learning via language model in-context tuning.
\newblock In \emph{Proceedings of the 60th Annual Meeting of the Association for Computational Linguistics (Volume 1: Long Papers)}, pp.\  719--730, 2022.

\bibitem[Chen et~al.(2024)Chen, Zhang, Han, Xiao, Liu, Chen, Li, Yang, and Sun]{chen2024robust}
Yingfa Chen, Zhengyan Zhang, Xu~Han, Chaojun Xiao, Zhiyuan Liu, Chen Chen, Kuai Li, Tao Yang, and Maosong Sun.
\newblock Robust and scalable model editing for large language models.
\newblock \emph{arXiv preprint arXiv:2403.17431}, 2024.

\bibitem[Cheng et~al.(2024)Cheng, Zhang, Tian, Chen, Liu, and Chen]{cheng2024editing}
Siyuan Cheng, Ningyu Zhang, Bozhong Tian, Xi~Chen, Qingbin Liu, and Huajun Chen.
\newblock Editing language model-based knowledge graph embeddings.
\newblock In \emph{Proceedings of the AAAI Conference on Artificial Intelligence}, volume~38, pp.\  17835--17843, 2024.

\bibitem[Cohen et~al.(2024)Cohen, Biran, Yoran, Globerson, and Geva]{cohen2024evaluating}
Roi Cohen, Eden Biran, Ori Yoran, Amir Globerson, and Mor Geva.
\newblock Evaluating the ripple effects of knowledge editing in language models.
\newblock \emph{Transactions of the Association for Computational Linguistics}, 12:\penalty0 283--298, 2024.

\bibitem[Dai et~al.(2021)Dai, Dong, Hao, Sui, Chang, and Wei]{dai2021knowledge}
Damai Dai, Li~Dong, Yaru Hao, Zhifang Sui, Baobao Chang, and Furu Wei.
\newblock Knowledge neurons in pretrained transformers.
\newblock \emph{arXiv preprint arXiv:2104.08696}, 2021.

\bibitem[De~Cao et~al.(2021)De~Cao, Aziz, and Titov]{de2021editing}
Nicola De~Cao, Wilker Aziz, and Ivan Titov.
\newblock Editing factual knowledge in language models.
\newblock In \emph{Proceedings of the 2021 Conference on Empirical Methods in Natural Language Processing}, pp.\  6491--6506, 2021.

\bibitem[Dong et~al.(2022)Dong, Dai, Song, Xu, Sui, and Li]{dong2022calibrating}
Qingxiu Dong, Damai Dai, Yifan Song, Jingjing Xu, Zhifang Sui, and Lei Li.
\newblock Calibrating factual knowledge in pretrained language models.
\newblock In \emph{Findings of the Association for Computational Linguistics: EMNLP 2022}, pp.\  5937--5947, 2022.

\bibitem[Geva et~al.(2020)Geva, Schuster, Berant, and Levy]{geva2020transformer}
Mor Geva, Roei Schuster, Jonathan Berant, and Omer Levy.
\newblock Transformer feed-forward layers are key-value memories.
\newblock \emph{arXiv preprint arXiv:2012.14913}, 2020.

\bibitem[Geva et~al.(2022)Geva, Caciularu, Wang, and Goldberg]{geva2022transformer}
Mor Geva, Avi Caciularu, Kevin~Ro Wang, and Yoav Goldberg.
\newblock Transformer feed-forward layers build predictions by promoting concepts in the vocabulary space.
\newblock \emph{arXiv preprint arXiv:2203.14680}, 2022.

\bibitem[Gupta et~al.(2024)Gupta, Sajnani, and Anumanchipalli]{gupta2024unified}
Akshat Gupta, Dev Sajnani, and Gopala Anumanchipalli.
\newblock A unified framework for model editing.
\newblock \emph{arXiv preprint arXiv:2403.14236}, 2024.

\bibitem[Gupta et~al.(2023)Gupta, Mondal, Sheshadri, Zhao, Li, Wiegreffe, and Tandon]{gupta2023editing}
Anshita Gupta, Debanjan Mondal, Akshay Sheshadri, Wenlong Zhao, Xiang Li, Sarah Wiegreffe, and Niket Tandon.
\newblock Editing common sense in transformers.
\newblock In \emph{Proceedings of the 2023 Conference on Empirical Methods in Natural Language Processing}, pp.\  8214--8232, 2023.

\bibitem[Hartvigsen et~al.(2024)Hartvigsen, Sankaranarayanan, Palangi, Kim, and Ghassemi]{hartvigsen2024aging}
Tom Hartvigsen, Swami Sankaranarayanan, Hamid Palangi, Yoon Kim, and Marzyeh Ghassemi.
\newblock Aging with grace: Lifelong model editing with discrete key-value adaptors.
\newblock \emph{Advances in Neural Information Processing Systems}, 36, 2024.

\bibitem[Hase et~al.(2021)Hase, Diab, Celikyilmaz, Li, Kozareva, Stoyanov, Bansal, and Iyer]{hase2021language}
Peter Hase, Mona Diab, Asli Celikyilmaz, Xian Li, Zornitsa Kozareva, Veselin Stoyanov, Mohit Bansal, and Srinivasan Iyer.
\newblock Do language models have beliefs? methods for detecting, updating, and visualizing model beliefs.
\newblock \emph{arXiv preprint arXiv:2111.13654}, 2021.

\bibitem[Hoelscher-Obermaier et~al.(2023)Hoelscher-Obermaier, Persson, Kran, Konstas, and Barez]{hoelscher2023detecting}
Jason Hoelscher-Obermaier, Julia Persson, Esben Kran, Ioannis Konstas, and Fazl Barez.
\newblock Detecting edit failures in large language models: An improved specificity benchmark.
\newblock In \emph{Findings of the Association for Computational Linguistics: ACL 2023}, pp.\  11548--11559, 2023.

\bibitem[Huang et~al.(2024)Huang, Lei, Zhang, Lv, and Yan]{huang2024see}
Youcheng Huang, Wenqiang Lei, Zheng Zhang, Jiancheng Lv, and Shuicheng Yan.
\newblock See the unseen: Better context-consistent knowledge-editing by noises.
\newblock \emph{arXiv preprint arXiv:2401.07544}, 2024.

\bibitem[Huang et~al.(2022{\natexlab{a}})Huang, Chen, Yu, and McKeown]{huang2022context}
Yukun Huang, Yanda Chen, Zhou Yu, and Kathleen McKeown.
\newblock In-context learning distillation: Transferring few-shot learning ability of pre-trained language models.
\newblock \emph{arXiv preprint arXiv:2212.10670}, 2022{\natexlab{a}}.

\bibitem[Huang et~al.(2022{\natexlab{b}})Huang, Shen, Zhang, Zhou, Rong, and Xiong]{huang2022transformer}
Zeyu Huang, Yikang Shen, Xiaofeng Zhang, Jie Zhou, Wenge Rong, and Zhang Xiong.
\newblock Transformer-patcher: One mistake worth one neuron.
\newblock In \emph{The Eleventh International Conference on Learning Representations}, 2022{\natexlab{b}}.

\bibitem[Ilharco et~al.(2022)Ilharco, Ribeiro, Wortsman, Schmidt, Hajishirzi, and Farhadi]{ilharco2022editing}
Gabriel Ilharco, Marco~Tulio Ribeiro, Mitchell Wortsman, Ludwig Schmidt, Hannaneh Hajishirzi, and Ali Farhadi.
\newblock Editing models with task arithmetic.
\newblock In \emph{The Eleventh International Conference on Learning Representations}, 2022.

\bibitem[Jelinek et~al.(1977)Jelinek, Mercer, Bahl, and Baker]{jelinek1977perplexity}
Fred Jelinek, Robert~L Mercer, Lalit~R Bahl, and James~K Baker.
\newblock Perplexity—a measure of the difficulty of speech recognition tasks.
\newblock \emph{The Journal of the Acoustical Society of America}, 62\penalty0 (S1):\penalty0 S63--S63, 1977.

\bibitem[Jiang et~al.(2024)Jiang, Wang, Wu, Zhong, Zeng, Gao, Li, Jiang, Shang, Tang, et~al.]{jiang2024learning}
Yuxin Jiang, Yufei Wang, Chuhan Wu, Wanjun Zhong, Xingshan Zeng, Jiahui Gao, Liangyou Li, Xin Jiang, Lifeng Shang, Ruiming Tang, et~al.
\newblock Learning to edit: Aligning llms with knowledge editing.
\newblock \emph{arXiv preprint arXiv:2402.11905}, 2024.

\bibitem[Lee et~al.(2022)Lee, Han, Hwang, Lee, Park, and Lee]{lee2022plug}
Kyungjae Lee, Wookje Han, Seung-won Hwang, Hwaran Lee, Joonsuk Park, and Sang-Woo Lee.
\newblock Plug-and-play adaptation for continuously-updated qa.
\newblock \emph{arXiv preprint arXiv:2204.12785}, 2022.

\bibitem[Li et~al.(2023)Li, Rawat, Zaheer, Wang, Lukasik, Veit, Yu, and Kumar]{li2023large}
Daliang Li, Ankit~Singh Rawat, Manzil Zaheer, Xin Wang, Michal Lukasik, Andreas Veit, Felix Yu, and Sanjiv Kumar.
\newblock Large language models with controllable working memory.
\newblock In \emph{Findings of the Association for Computational Linguistics: ACL 2023}, pp.\  1774--1793, 2023.

\bibitem[Li et~al.(2024)Li, Li, Song, Yang, Ma, and Yu]{li2024pmet}
Xiaopeng Li, Shasha Li, Shezheng Song, Jing Yang, Jun Ma, and Jie Yu.
\newblock Pmet: Precise model editing in a transformer.
\newblock In \emph{Proceedings of the AAAI Conference on Artificial Intelligence}, volume~38, pp.\  18564--18572, 2024.

\bibitem[Ma et~al.(2023)Ma, Gu, Ling, Liu, and Liu]{ma2023untying}
Jun-Yu Ma, Jia-Chen Gu, Zhen-Hua Ling, Quan Liu, and Cong Liu.
\newblock Untying the reversal curse via bidirectional language model editing.
\newblock \emph{arXiv preprint arXiv:2310.10322}, 2023.

\bibitem[Madaan et~al.(2022)Madaan, Tandon, Clark, and Yang]{madaan2022memory}
Aman Madaan, Niket Tandon, Peter Clark, and Yiming Yang.
\newblock Memory-assisted prompt editing to improve gpt-3 after deployment.
\newblock In \emph{Proceedings of the 2022 Conference on Empirical Methods in Natural Language Processing}, pp.\  2833--2861, 2022.

\bibitem[Meng et~al.(2022{\natexlab{a}})Meng, Bau, Andonian, and Belinkov]{meng2022locating}
Kevin Meng, David Bau, Alex Andonian, and Yonatan Belinkov.
\newblock Locating and editing factual associations in gpt.
\newblock \emph{Advances in Neural Information Processing Systems}, 35:\penalty0 17359--17372, 2022{\natexlab{a}}.

\bibitem[Meng et~al.(2022{\natexlab{b}})Meng, Sharma, Andonian, Belinkov, and Bau]{meng2022mass}
Kevin Meng, Arnab~Sen Sharma, Alex Andonian, Yonatan Belinkov, and David Bau.
\newblock Mass-editing memory in a transformer.
\newblock \emph{arXiv preprint arXiv:2210.07229}, 2022{\natexlab{b}}.

\bibitem[Mitchell et~al.(2021)Mitchell, Lin, Bosselut, Finn, and Manning]{mitchell2021fast}
Eric Mitchell, Charles Lin, Antoine Bosselut, Chelsea Finn, and Christopher~D Manning.
\newblock Fast model editing at scale.
\newblock In \emph{International Conference on Learning Representations}, 2021.

\bibitem[Mitchell et~al.(2022)Mitchell, Lin, Bosselut, Manning, and Finn]{mitchell2022memory}
Eric Mitchell, Charles Lin, Antoine Bosselut, Christopher~D Manning, and Chelsea Finn.
\newblock Memory-based model editing at scale.
\newblock In \emph{International Conference on Machine Learning}, pp.\  15817--15831. PMLR, 2022.

\bibitem[Mu et~al.(2024)Mu, Li, and Goodman]{mu2024learning}
Jesse Mu, Xiang Li, and Noah Goodman.
\newblock Learning to compress prompts with gist tokens.
\newblock \emph{Advances in Neural Information Processing Systems}, 36, 2024.

\bibitem[Murty et~al.(2022)Murty, Manning, Lundberg, and Ribeiro]{murty2022fixing}
Shikhar Murty, Christopher~D Manning, Scott Lundberg, and Marco~Tulio Ribeiro.
\newblock Fixing model bugs with natural language patches.
\newblock In \emph{Proceedings of the 2022 Conference on Empirical Methods in Natural Language Processing}, pp.\  11600--11613, 2022.

\bibitem[Ni et~al.(2023)Ni, Chen, Li, Hu, Xu, and Yang]{ni2023forgetting}
Shiwen Ni, Dingwei Chen, Chengming Li, Xiping Hu, Ruifeng Xu, and Min Yang.
\newblock Forgetting before learning: Utilizing parametric arithmetic for knowledge updating in large language models.
\newblock \emph{arXiv preprint arXiv:2311.08011}, 2023.

\bibitem[Onoe et~al.(2023)Onoe, Zhang, Padmanabhan, Durrett, and Choi]{onoe2023can}
Yasumasa Onoe, Michael Zhang, Shankar Padmanabhan, Greg Durrett, and Eunsol Choi.
\newblock Can lms learn new entities from descriptions? challenges in propagating injected knowledge.
\newblock In \emph{Proceedings of the 61st Annual Meeting of the Association for Computational Linguistics (Volume 1: Long Papers)}, pp.\  5469--5485, 2023.

\bibitem[Raunak \& Menezes(2022)Raunak and Menezes]{raunak2022rank}
Vikas Raunak and Arul Menezes.
\newblock Rank-one editing of encoder-decoder models.
\newblock \emph{arXiv preprint arXiv:2211.13317}, 2022.

\bibitem[Rozner et~al.(2024)Rozner, Battash, Wolf, and Lindenbaum]{rozner2024knowledge}
Amit Rozner, Barak Battash, Lior Wolf, and Ofir Lindenbaum.
\newblock Knowledge editing in language models via adapted direct preference optimization.
\newblock \emph{arXiv preprint arXiv:2406.09920}, 2024.

\bibitem[Santurkar et~al.(2021)Santurkar, Tsipras, Elango, Bau, Torralba, and Madry]{santurkar2021editing}
Shibani Santurkar, Dimitris Tsipras, Mahalaxmi Elango, David Bau, Antonio Torralba, and Aleksander Madry.
\newblock Editing a classifier by rewriting its prediction rules.
\newblock \emph{Advances in Neural Information Processing Systems}, 34:\penalty0 23359--23373, 2021.

\bibitem[Sharma et~al.(2024)Sharma, Atkinson, and Bau]{sharma2024locating}
Arnab~Sen Sharma, David Atkinson, and David Bau.
\newblock Locating and editing factual associations in mamba.
\newblock \emph{arXiv preprint arXiv:2404.03646}, 2024.

\bibitem[Shi et~al.(2024)Shi, Tan, Wu, Zhong, Zhou, and Liu]{shi2024retrieval}
Yucheng Shi, Qiaoyu Tan, Xuansheng Wu, Shaochen Zhong, Kaixiong Zhou, and Ninghao Liu.
\newblock Retrieval-enhanced knowledge editing for multi-hop question answering in language models.
\newblock \emph{arXiv preprint arXiv:2403.19631}, 2024.

\bibitem[Sinitsin et~al.(2019)Sinitsin, Plokhotnyuk, Pyrkin, Popov, and Babenko]{sinitsin2019editable}
Anton Sinitsin, Vsevolod Plokhotnyuk, Dmitry Pyrkin, Sergei Popov, and Artem Babenko.
\newblock Editable neural networks.
\newblock In \emph{International Conference on Learning Representations}, 2019.

\bibitem[Snell et~al.(2022)Snell, Klein, and Zhong]{snell2022learning}
Charlie Snell, Dan Klein, and Ruiqi Zhong.
\newblock Learning by distilling context.
\newblock \emph{arXiv preprint arXiv:2209.15189}, 2022.

\bibitem[Tan et~al.(2023)Tan, Zhang, and Fu]{tan2023massive}
Chenmien Tan, Ge~Zhang, and Jie Fu.
\newblock Massive editing for large language model via meta learning.
\newblock In \emph{The Twelfth International Conference on Learning Representations}, 2023.

\bibitem[Tanno et~al.(2022)Tanno, F~Pradier, Nori, and Li]{tanno2022repairing}
Ryutaro Tanno, Melanie F~Pradier, Aditya Nori, and Yingzhen Li.
\newblock Repairing neural networks by leaving the right past behind.
\newblock \emph{Advances in Neural Information Processing Systems}, 35:\penalty0 13132--13145, 2022.

\bibitem[Touvron et~al.(2023)Touvron, Martin, Stone, Albert, Almahairi, Babaei, Bashlykov, Batra, Bhargava, Bhosale, et~al.]{touvron2023llama}
Hugo Touvron, Louis Martin, Kevin Stone, Peter Albert, Amjad Almahairi, Yasmine Babaei, Nikolay Bashlykov, Soumya Batra, Prajjwal Bhargava, Shruti Bhosale, et~al.
\newblock Llama 2: Open foundation and fine-tuned chat models.
\newblock \emph{arXiv preprint arXiv:2307.09288}, 2023.

\bibitem[Wang et~al.(2024{\natexlab{a}})Wang, Zhang, Xu, Xi, Deng, Yao, Zhang, Yang, Wang, and Chen]{wang2024detoxifying}
Mengru Wang, Ningyu Zhang, Ziwen Xu, Zekun Xi, Shumin Deng, Yunzhi Yao, Qishen Zhang, Linyi Yang, Jindong Wang, and Huajun Chen.
\newblock Detoxifying large language models via knowledge editing.
\newblock \emph{arXiv preprint arXiv:2403.14472}, 2024{\natexlab{a}}.

\bibitem[Wang et~al.(2024{\natexlab{b}})Wang, Li, Zhang, Xu, Yao, Jiang, Xie, Huang, and Chen]{wang2024wise}
Peng Wang, Zexi Li, Ningyu Zhang, Ziwen Xu, Yunzhi Yao, Yong Jiang, Pengjun Xie, Fei Huang, and Huajun Chen.
\newblock Wise: Rethinking the knowledge memory for lifelong model editing of large language models.
\newblock \emph{arXiv preprint arXiv:2405.14768}, 2024{\natexlab{b}}.

\bibitem[Wang et~al.(2024{\natexlab{c}})Wang, Chen, Peng, and Chang]{wang2024deepedit}
Yiwei Wang, Muhao Chen, Nanyun Peng, and Kai-Wei Chang.
\newblock Deepedit: Knowledge editing as decoding with constraints.
\newblock \emph{arXiv preprint arXiv:2401.10471}, 2024{\natexlab{c}}.

\bibitem[Wang et~al.(2024{\natexlab{d}})Wang, Chen, Shang, and McAuley]{wang2024memoryllm}
Yu~Wang, Xiusi Chen, Jingbo Shang, and Julian McAuley.
\newblock Memoryllm: Towards self-updatable large language models.
\newblock In \emph{International Conference on Machine Learning (ICML)}, 2024{\natexlab{d}}.

\bibitem[Wei et~al.(2022)Wei, Wang, Schuurmans, Bosma, Xia, Chi, Le, Zhou, et~al.]{wei2022chain}
Jason Wei, Xuezhi Wang, Dale Schuurmans, Maarten Bosma, Fei Xia, Ed~Chi, Quoc~V Le, Denny Zhou, et~al.
\newblock Chain-of-thought prompting elicits reasoning in large language models.
\newblock \emph{Advances in neural information processing systems}, 35:\penalty0 24824--24837, 2022.

\bibitem[Wu et~al.(2023)Wu, Li, Xu, Dong, Wu, Bian, and Xiong]{wu2023depn}
Xinwei Wu, Junzhuo Li, Minghui Xu, Weilong Dong, Shuangzhi Wu, Chao Bian, and Deyi Xiong.
\newblock Depn: Detecting and editing privacy neurons in pretrained language models.
\newblock In \emph{The 2023 Conference on Empirical Methods in Natural Language Processing}, 2023.

\bibitem[Xu et~al.(2023)Xu, Hou, Che, and Zhang]{xu2023language}
Yang Xu, Yutai Hou, Wanxiang Che, and Min Zhang.
\newblock Language anisotropic cross-lingual model editing.
\newblock In \emph{Findings of the Association for Computational Linguistics: ACL 2023}, pp.\  5554--5569, 2023.

\bibitem[Yu et~al.(2023)Yu, Jeoung, Kasi, Yu, and Ji]{yu2023unlearning}
Charles Yu, Sullam Jeoung, Anish Kasi, Pengfei Yu, and Heng Ji.
\newblock Unlearning bias in language models by partitioning gradients.
\newblock In \emph{Findings of the Association for Computational Linguistics: ACL 2023}, pp.\  6032--6048, 2023.

\bibitem[Yu et~al.(2024)Yu, Chen, Zhou, and He]{yu2024melo}
Lang Yu, Qin Chen, Jie Zhou, and Liang He.
\newblock Melo: Enhancing model editing with neuron-indexed dynamic lora.
\newblock In \emph{Proceedings of the AAAI Conference on Artificial Intelligence}, volume~38, pp.\  19449--19457, 2024.

\bibitem[Zhang et~al.(2024)Zhang, Yao, Tian, Wang, Deng, Wang, Xi, Mao, Zhang, Ni, Cheng, Xu, Xu, Gu, Jiang, Xie, Huang, Liang, Zhang, Zhu, Zhou, and Chen]{zhang2024comprehensive}
Ningyu Zhang, Yunzhi Yao, Bozhong Tian, Peng Wang, Shumin Deng, Mengru Wang, Zekun Xi, Shengyu Mao, Jintian Zhang, Yuansheng Ni, Siyuan Cheng, Ziwen Xu, Xin Xu, Jia-Chen Gu, Yong Jiang, Pengjun Xie, Fei Huang, Lei Liang, Zhiqiang Zhang, Xiaowei Zhu, Jun Zhou, and Huajun Chen.
\newblock A comprehensive study of knowledge editing for large language models, 2024.

\bibitem[Zhang et~al.(2018)Zhang, Galley, Gao, Gan, Li, Brockett, and Dolan]{zhang2018generating}
Yizhe Zhang, Michel Galley, Jianfeng Gao, Zhe Gan, Xiujun Li, Chris Brockett, and Bill Dolan.
\newblock Generating informative and diverse conversational responses via adversarial information maximization.
\newblock \emph{Advances in Neural Information Processing Systems}, 31, 2018.

\bibitem[Zheng et~al.(2023)Zheng, Li, Dong, Fan, Wu, Xu, and Chang]{zheng2023can}
Ce~Zheng, Lei Li, Qingxiu Dong, Yuxuan Fan, Zhiyong Wu, Jingjing Xu, and Baobao Chang.
\newblock Can we edit factual knowledge by in-context learning?
\newblock In \emph{Proceedings of the 2023 Conference on Empirical Methods in Natural Language Processing}, pp.\  4862--4876, 2023.

\bibitem[Zhong et~al.(2023)Zhong, Wu, Manning, Potts, and Chen]{zhong2023mquake}
Zexuan Zhong, Zhengxuan Wu, Christopher~D Manning, Christopher Potts, and Danqi Chen.
\newblock Mquake: Assessing knowledge editing in language models via multi-hop questions.
\newblock In \emph{Proceedings of the 2023 Conference on Empirical Methods in Natural Language Processing}, pp.\  15686--15702, 2023.

\bibitem[Zhu et~al.(2020)Zhu, Rawat, Zaheer, Bhojanapalli, Li, Yu, and Kumar]{zhu2020modifying}
Chen Zhu, Ankit~Singh Rawat, Manzil Zaheer, Srinadh Bhojanapalli, Daliang Li, Felix Yu, and Sanjiv Kumar.
\newblock Modifying memories in transformer models.
\newblock \emph{arXiv preprint arXiv:2012.00363}, 2020.

\end{thebibliography}
\bibliographystyle{iclr2025_conference}

\newpage
\appendix

\section{In-Context Editing}

In this section, we provide the proofs for \autoref{obs:ft_with_sampling} in the main text in \autoref{appendix:sec:onehot} and \autoref{appendix:sec:proof_ft_with_sampling}. 

\subsection{Properties of One-Hot Distribution}
\label{appendix:sec:onehot}
The one-hot distribution, denoted as $\delta_{\vy}(\vx)$, is a distribution defined on a sequence $\vx = (x_1, x_2, ..., x_n)$, where $\vy = (y_1, y_2, ..., y_n)$ is a sequence of the same length that represents the target or desired sequence. The one-hot distribution is a product of Kronecker deltas, as follows:
\[
\delta_{\vy}(\vx) \equiv \prod_{i=1}^n \delta(x_i, y_i),
\]
where each Kronecker delta follows the definition below
\[
\delta(x_i, y_i) \equiv
	\begin{cases}
	  1 & \text{if } x_i = y_i \\
	  0 & \text{if } x_i \neq y_i
	\end{cases}.
\]
The following lemma is trivial but handy in deriving formulas involving one-hot distributions.
\begin{lemma} The expectation value of a measurement function $f(\vx)$ on a one-hot distribution $\delta_{\vy}(\vx)$ with target $\vy$ equals to the measurement on target $f(\vy)$, i.e.
\label{lemma:onehot}
\begin{equation}
    \sum_{\vx}\delta_{\vy}(\vx)f(\vx)\equiv \sum_{x_1}\dots\sum_{x_n}\delta_{\vy}(\vx)f(\vx)=f(\vy).
\end{equation}

\end{lemma}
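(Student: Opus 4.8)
The plan is to exploit the defining property of the Kronecker delta directly. Since $\delta_{\vy}(\vx) = \prod_{i=1}^n \delta(x_i, y_i)$, this product equals $1$ precisely when $x_i = y_i$ for every index $i$ (that is, when $\vx = \vy$) and equals $0$ otherwise, because a single mismatched coordinate forces one factor, and hence the whole product, to vanish. Consequently, in the multiple sum $\sum_{x_1}\dots\sum_{x_n}\delta_{\vy}(\vx)f(\vx)$ every term is annihilated except the one at $\vx = \vy$, which survives with weight $\delta_{\vy}(\vy) = 1$. This immediately collapses the sum to $1 \cdot f(\vy) = f(\vy)$.

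For a proof that makes the bookkeeping fully explicit, I would instead peel off the summations one coordinate at a time. First I would isolate the innermost sum over $x_n$: among the factors of $\delta_{\vy}(\vx)$, only $\delta(x_n, y_n)$ depends on $x_n$, so the remaining deltas $\prod_{i=1}^{n-1}\delta(x_i, y_i)$ factor out, and the sifting identity $\sum_{x_n}\delta(x_n, y_n)\,f(\dots, x_n) = f(\dots, y_n)$ replaces $x_n$ by $y_n$ inside $f$. I would then repeat this step for $x_{n-1}, \dots, x_1$, inducting on the number of remaining free coordinates; after all $n$ sums are evaluated, each argument $x_i$ has been substituted by $y_i$, yielding $f(y_1, \dots, y_n) = f(\vy)$.

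The statement is elementary, so there is no genuine obstacle; the only point requiring a little care is that $f$ is a joint function of all coordinates, so one cannot naively factor the sum as a product over $i$. The support argument sidesteps this by evaluating all coordinates simultaneously, while the inductive argument handles it by noting that the deltas not yet summed over are constants with respect to the current inner sum and may be pulled outside. I would present the support argument as the primary proof for its brevity and mention the iterated-sifting version as the explicit justification of the coordinate-wise substitution.
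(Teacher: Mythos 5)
Your primary argument—observing that $\delta_{\vy}(\vx)$ vanishes unless $\vx=\vy$, so the sum collapses to the single surviving term $1\cdot f(\vy)$—is exactly the paper's proof, which likewise splits the sum into the $\vx=\vy$ term and the vanishing remainder. The proposal is correct and takes essentially the same approach; the additional coordinate-wise sifting argument is a fine but unnecessary elaboration.
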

\begin{proof}
Since the only possible outcome of sampling $\vx$ is $\vy$, the expectation of measurement is trivially $f(\vy)$. Mathematically,
\begin{align*}
    \sum_{\vx}\delta_{\vy}(\vx)f(\vx)&=\delta_{\vy}(\vy)f(\vy)+\sum_{\vx\neq{\vy}}\delta_{\vy}(\vx)f(\vx)\\
    &=1\cdot f(y)+ \sum_{\vx\neq\vy}0\cdot f(\vx)=f(y).
\end{align*}
\end{proof}
This seemingly trivial lemma is useful in subsequent proofs in that we may substitute all occurrences of variable $\vx$ with target $\vy$ after summing over $\vx$ with a one-hot distribution $\delta_{\vy}(\vx)$.

\subsection{The Ineffectiveness of Naive Sampling}
\label{appendix:sec:proof_ft_with_sampling}
The fine-tuning objective $\mathcal{L}_{\text{FT}}$ is the Kullback-Leibler (KL) divergence between the one-hot distribution $\delta_{\vx^*}(x_{1:m})$ and the model's predicted distribution $p_\theta(x_{1:m}| \vq)$\, which is defined as
\begin{equation}
	\mathcal{L}_{\text{FT}} \equiv D_{\text{KL}}\left(\delta_{\vx^*}(x_{1:m}) \;||\; p_\theta(x_{1:m}| \vq)\right),
	\label{eq:ft}
\end{equation}
where $\delta_{\vx^*}(x_{1:m})$ is the one-hot distribution.  

By substituting the definition of KL divergence into the fine-tuning loss $L_{ft}$ given in ~\autoref{eq:ft}, we obtain
\begin{align*}
   	\mathcal{L}_{\text{FT}}  =\sum_{x_{1:m}}\delta_{\vx^*}(x_{1:m})\cdot\log\left(\frac{\delta_{\vx^*}(x_{1:m})}{p_\theta(x_{1:m}| \mathbf{q})}\right),
\end{align*}
and by applying \autoref{lemma:onehot}, we obtain
\begin{align*}
    \mathcal{L}_{\text{FT}} = - \log p_\theta({\mathbf{x}}^*| \mathbf{q}),
\end{align*}
which aligns with the maximum likelihood estimation (MLE) objective.

For fine tuning with sampling, objective $\mathcal{L}^*_{ft}$ is expressed as:
\begin{equation}
\mathcal{L}^*_{\text{FT}} \equiv D_{\text{KL}}\Big(\delta_{\vx^*}(x_{1:m})p_\theta\left(\left. x_{m+1:T}\right|\left[\vq, \vx^*\right]\right) \;\|\; p_\theta\left(x_{1:T}|\vq\right)\Big)
\label{eq:self_finetune}
\end{equation}
where $x_{1:T}$ is the sequence truncated at length $T$. All proofs still hold for $T\rightarrow \infty$.

To illustrate fine-tuning with sampling does not alleviate over-fitting, we prove \autoref{obs:ft_with_sampling}, i.e. $\mathcal{L}_{\text{FT}}=\mathcal{L}^*_{ft}$.

\begin{proof}
Expanding Equation \ref{eq:self_finetune} using the definition of KL divergence reveals that:
\begin{align}
\mathcal{L}^*_{\text{FT}}&=\sum_{x_{1:T}}
\begin{aligned}[t]
    &\delta_{\vx^*} (x_{1:m})p_\theta\left(\left. x_{m+1:T}\right|\left[\vq, \vx^*\right]\right)\nonumber\\
&\cdot\log \frac{\delta_{\vx^*} (x_{1:m})p_\theta\left(\left. x_{m+1:T}\right|\left[\vq, \vx^*\right]\right)}{p_\theta\left(x_{1:T}|\vq\right)}\nonumber
\end{aligned}\\
&=\sum_{x_{1:m}}
\begin{aligned}[t]
&\sum_{x_{m+1:T}}\delta_{\vx^*} (x_{1:m})p_\theta\left(\left. x_{m+1:T}\right|\left[\vq, \vx^*\right]\right)\nonumber\\
& \cdot\log \frac{\delta_{\vx^*} (x_{1:m})p_\theta\left(\left. x_{m+1:T}\right|\left[\vq, \vx^*\right]\right)}{p_\theta\left(x_{1:m}|\vq\right)p_\theta\left(x_{m+1:T}|[\vq,x_{1:m}]\right)}\nonumber\\
\end{aligned}\\
&=\sum_{x_{m+1:T}}
\begin{aligned}[t]
&p_\theta\left(\left. x_{m+1:T}\right|\left[\vq, \vx^*\right]\right)\nonumber\\
& \cdot\log \frac{1\cdot p_\theta\left(\left. x_{m+1:T}\right|\left[\vq, \vx^*\right]\right)}{p_\theta\left(\vx^*|\vq\right)p_\theta\left(x_{m+1:T}|[\vq,\vx^*]\right)}\nonumber\\
\end{aligned}\\
&=
\begin{aligned}[t]
&-\log p_\theta({\vx}^*|\vq)\\
& +\underbrace{D_{\text{KL}}\Big(p_\theta\left(x_{m+1:T}|[\vq, \vx^*] \right)\|p_\theta\left(\left. x_{m+1:T}\right|[\vq,\vx^*]\right)\Big)}_0\nonumber\\
\end{aligned}\\
&=\mathcal{L}_{\text{FT}},
\end{align}
and from line 2 to line 3 we apply \autoref{lemma:onehot} and substitute all occurrences of $x_{1:m}$ with $\vx^*$.
\end{proof}
Consequently, sampling through self-generation without external inputs does not alleviate the problem of over-fitting. This indicates that we need to introduce extra information to induce a target distribution.

\subsection{Decomposing Consistent In-Context Editing}
\label{appendix:sec:combined_loss}
The objective of consistent in-context fine tuning in \autoref{eq:loss_total} is given as $\mathcal{L} = \mathcal{L}_{\text{FT}} + \lambda \mathcal{L}_{\text{ICE}}$. In this section, we demonstrate that when $\lambda=1$, this ojective is equivalent to sampling sequences $\vx_{\vc}$ from the model conditioned on $[\vc, \vq, \vx^{*}]$ and maximize the likelihood of the combined sequence $[\vx^{*}, \vx_{\vc}]$.

First, it is straightforward to show that given samples $x$ from distribution $q(x)$, maximizing the likelihood of $x$ for $p_{\theta}(x)$ is equivalent to minimizing the KL divergence between $p_{\theta}(x)$ and $q(x)$:
\begin{align*}
    \text{argmax}_{\theta} \mathbb{E}_{x \sim q(x)} [\log p_\theta(x)] &= \text{argmax}_{\theta} -( \mathbb{E}_{x \sim q(x)}\left[\log q(x)\right] - \mathbb{E}_{x \sim q(x)}\left[\log p_\theta(x)\right]) \\
    &= \text{argmin}_{\theta} D_{KL}(p_\theta(x) \| q(x)).
\end{align*}
Therefore, maximizing the likelihood of the combined sequence $[\vx^{*}, \vx_{\vc}]$ is equivalent to minimizing the KL divergence between $p_{\theta}(x_{1:T}|\vq)$ and $\delta_{\vx^*}(x_{1:m})p_\theta(x_{m+1:T}|[\vq,\vx^*])$:
\begin{equation}
    \mathcal{L}=
D_{KL}\Big(\delta_{\vx^*}(x_{1:m})p_\theta(x_{m+1:T}|\left[\mathbf{c},\mathbf{q}, \mathbf{x}^*\right]) \;\|\; p_\theta\left(x_{1:T}|\mathbf{q}\right)\Big),
\end{equation}
which may be expanded using the definition of KL divergence as
\begin{equation}
\begin{aligned}
    \mathcal{L}&=
    \begin{aligned}[t]
     {\sum_{x_{1:T}}}&
	\delta_{\vx^*}(x_{1:m})p_\theta\left(\left. x_{m+1:T}\right|\left[\vc,\vq, \vx^*\right]\right)
     &\cdot\log \frac{\delta_{\vx^*}(x_{1:m})p_\theta\left(\left. x_{m+1:T}\right|\left[\vc,\vq, \vx^*\right]\right)}
	{p_\theta\left(x_{1:T}|\mathbf{q}\right)}
    \end{aligned}
 \\
 &=
    \begin{aligned}[t]
     {\sum_{x_{1:m}}}&
	\sum_{x_{m+1:T}}\delta_{\vx^*}(x_{1:m})p_\theta\left(\left. x_{m+1:T}\right|\left[\vc,\vq, \vx^*\right]\right)
     &\cdot\log \frac{\delta_{\vx^*}(x_{1:m})p_\theta\left(\left. x_{m+1:T}\right|\left[\vc,\vq, \vx^*\right]\right)}
	{p_\theta\left(x_{1:m}|\mathbf{q}\right)\;p_\theta\left(x_{m+1:T}|[\mathbf{q},x_{1:m}]\right)}.
    \end{aligned}
\end{aligned}
\end{equation}
Using \autoref{lemma:onehot}, we may further simplifies $\mathcal{L}$  as
\begin{equation}
\begin{aligned}
    \mathcal{L}&=
    \begin{aligned}[t]
     \sum_{x_{m+1:T}}
	p_\theta\left(\left. x_{m+1:T}\right|\left[\vc,\vq, \vx^*\right]\right)
     \cdot\log \frac{1\cdot p_\theta\left(\left. x_{m+1:T}\right|\left[\vc,\vq, \vx^*\right]\right)}
	{p_\theta\left(\vx^*|\mathbf{q}\right)\;p_\theta\left(x_{m+1:T}|[\mathbf{q},\vx^*]\right)},
    \end{aligned}
     \\
     &=\begin{aligned}[t]
     -\log p_\theta (\vx^*|\vq)
     +D_{KL}\begin{aligned}[t]
         \Big(p_\theta\left(\left. x_{m+1:T}\right|\left[\vc,\vq, \vx^*\right]\right)
          \|p_\theta\left(x_{m+1:T}|[\mathbf{q},\vx^*]\right) \Big)
     \end{aligned}
    \end{aligned}\\
    &= \mathcal{L}_\text{FT}+\mathcal{L}_\text{ICE}\Bigl([\vq,\vx^*]\Bigr),
\end{aligned}
\end{equation}
where the second term is the consistent in-context editing loss $\mathcal{L}_{ice}$ with the substitution $\vq \leftarrow [\vq,\mathbf{x}^*]$.

\section{Algorithm}
Here, we provide the algorithm of in-context editing (ICE) in \autoref{alg:ice}.

\begin{algorithm}[t]
    \DontPrintSemicolon
    \LinesNumbered
    \caption{Consistent In-Context Editing (ICE)}\label{alg:ice}
    \KwData{Initial model parameters $\theta_0$, context $\vc$, query $\vq$, target sequence $\vx^*$, learning rate $\eta$, maximum iterations $S$}
    \KwResult{Updated model parameters $\theta$}
    
    \For{$s = 0$ \KwTo $S-1$}{
        Sample in-context sequences: $\vx_{\vc} \sim p_{\theta_{s}}(\vx | [\vc, \vq, \vx^{*}])$ \;
        {\color{blue}
        Compute gradient: $\delta\theta_s \gets$ 
        \quad $\small \nabla_{\theta_s} D_{\text{\tiny KL}}( \delta_{\vx^*}(\vx) p_{\theta_{s}}(\vx | [\vc, \vq, \vx^{*}]).\text{detach()} \,||\, p_{\theta_{s}}(\vx | \vq))$ \\
        \, $= \nabla_{\theta} \mathbb{E}_{\vx_{\vc}}[- \log p_{\theta_{s}}([\vx^{*}, \vx_{\vc}] | \vq)]$ \;
        }
        Clip gradient: $\delta\theta_s \gets \text{clip}(\delta\theta_s, -\epsilon_g, \epsilon_g)$ \;
        Update parameters: $\theta_{s+1} \gets \theta_{s} - \eta\cdot\delta\theta_s$ \;
        \If{convergence criterion met}{
            \textbf{break} \;
        }
    }
    \Return $\theta_{S}$
\end{algorithm}

\section{Context Generation Code}
\label{appendix:context_generation}
This section presents the core logic for context generation in \autoref{lst:context_generation}, utilizing a language model to provide relevant context based on a given query and target. \autoref{lst:context_generation_example} shows some examples of contexts generated.

\begin{tcolorbox}[colback=gray!10, colframe=black, title=Context generation code]
\vspace{-10pt}
\lstset{language=Python, basicstyle=\ttfamily\footnotesize, breaklines=true, showstringspaces=false, keywordstyle=\color{blue}, commentstyle=\color{gray}, stringstyle=\color{gray}}
\begin{lstlisting}[label={lst:context_generation}]
def generate_context(prompt, target):
    """
    Generate context for a given query and target using a language model (e.g., GPT-4) to generate context.
    """
    msg = f"Please help me generate five complete statements as [context]s according to the semantics of incomplete facts '{prompt}' and '{target}'."
    context = language_model.generate(msg)
    return context
\end{lstlisting}
\vspace{-10pt}
\end{tcolorbox}

\begin{tcolorbox}[colback=gray!10, colframe=black, title=Example contexts]
\vspace{-10pt}
\lstset{language=Python, basicstyle=\ttfamily\footnotesize, breaklines=true, showstringspaces=false, keywordstyle=\color{blue}, commentstyle=\color{gray}, stringstyle=\color{gray}}
\begin{lstlisting}[label={lst:context_generation_example}]
{
    "prompt": "The mother of Mallory Reaves is whom?",
    "ground_truth": [
        "Brynne Chandler"
    ],
    "target_new": "Lalli Reaves",
    "context": [
        "Mallory Reaves's mother is Lalli Reaves.",
        "Lalli Reaves is the mother of Mallory Reaves.",
        "The mother of Mallory Reaves is identified as Lalli Reaves.",
        "In terms of parentage, Mallory Reaves's mother is Lalli Reaves.",
        "Mallory Reaves is the child of Lalli Reaves."
    ]
}
\end{lstlisting}
\vspace{-10pt}
\end{tcolorbox}

\section{Experiments}
  
\subsection{Metrics}
\label{appendix:sec:metrics}
We employ the metrics described in \autoref{sec:preliminary_setup}. However, we highlight an issue inherent in these metrics. The \textbf{fluency} metric, defined as $-\sum\nolimits_{n=2}^3 w_n \sum\nolimits_{\vx} f_n(\vx)\log f_n(\vx)$, is based on bi- and tri-gram probabilities and fails to penalize \textbf{semantically unmeaningful sentences} or \textbf{repetitive sequences} (\autoref{tab:fluency}).

To address this limitation, we suggest incorporating \textbf{Perplexity}~\cite{jelinek1977perplexity} as an additional measure, which quantifies how well a probability model predicts a given sample. Specifically, for a token sequence $\vx = (x_1, x_2, \ldots, x_T)$ and a measuring model $\mathcal{M}$ (\eg, Llama2~\cite{touvron2023llama}), the perplexity is defined as
\begin{equation*}
    \text{PPL} = \exp\left( -\frac{1}{T} \sum\nolimits_{i=1}^T \log p_{\mathcal{M}}(x_i | x_{<i}) \right).
\end{equation*}
A lower perplexity value indicates that the model generates more natural and fluent language outputs.

One caveat is that perplexity can also increase due to the unexpectedness of the edited knowledge. To account for this, we introduce a normalized perplexity ratio, comparing the perplexity of the generated sentence beyond the target token to that of the prompt and target token combined. This ratio is computed as:
\begin{equation}
\text{PPL}_{r} = \frac{\text{PPL}(\vx | \vq, \vx^{*})}{\text{PPL}(\vq, \vx^{*})}.
\end{equation}

\begin{table}[hb]
\caption{The fluency metric might not detect repeating sequences or non-sensical sentences.}
\label{tab:fluency}
\centering
\resizebox{1\linewidth}{!}{
\begin{tabular}{p{\linewidth}cc}
\Xhline{4\arrayrulewidth}
\textbf{Example Sentence} & \textbf{Fluency} & $\textbf{PPL}_{r}$ \\
\toprule
\toprule
What is the name of the current head of the United States of America government? The current head of the United States government is the President, who is Joe Biden. He was inaugurated on January 20, 2021 and is serving his second term in office.  & 538.34  & \textbf{6.08} \\
\midrule
What is the name of the current head of the United States of America government? What is the name of the head of the United States government?
What is the title of the current head of the US government?
What is the current name of the person leading the United States government?
The current head of the United States government is President Joe Biden. He was inaugurated as the 46th President of the United States on January 20, 202 & \textbf{538.89} & 52.03 \\
\midrule
\midrule
Which city is the headquarter of Google located in? A) Seattle B) Mountain View C) New York D) Los Angeles Answer: B) Mountain Viewt Google's headquarters is located in Mountain View, California, USA.tt  & 527.31 & \textbf{3.06} \\
\midrule
Which city is the headquarter of Google located in? Answer: Google's headquarters is located in Mountain View, California, USA.tldr: Google's headquarters is located in Mountain View, California, USA.    ```import java.util.*;public class Solution \{    public static void main(String[] args) \{        Scanner scanner = new Scanner(System. & \textbf{572.03} & 39.72 \\
\Xhline{4\arrayrulewidth}
\end{tabular}
}
\end{table}

\subsection{Implementation Details}
\paragraph{ROME and MEMIT}
ROME and MEMIT employ causal tracing results to pinpoint the layers in the model responsible for recalling facts that need updating. Specifically, ROME updates GPT2 at layer 17 and Llama2 at layer 5. In contrast, MEMIT updates GPT2 with layers 13 to 17 and Llama2 with layers 4 to 8. Additionally, we calculate covariance statistics for ROME and MEMIT using a sample of 100,000 entries from the Wikitext\footnote{https://huggingface.co/datasets/Salesforce/wikitext} in \texttt{fp32} format. Further implementation details can be seen in ~\cite{meng2022locating}.

\paragraph{FT-L and FT-M}
In the case of FT-L and FT-M, we follow the updated layers as outlined in MEMIT to update multiple layers for improved performance. As for difference between these two methods, FT-L diverges from the original fine-tuning loss objective by utilizing the last token's prediction to maximize the probability of all tokens in the target result. Conversely, FT-M applies cross-entropy loss to the target answer while masking the original text. See ~\cite{zhang2024comprehensive} for more detailed implementation.

\paragraph{\method}
In \method, we also follow the MEMIT setting for selecting layers to update. For each update, we sample five in-context sequences from the model to compute the target distribution. Additionally, we constrain the model updated weight to be within $\pm 5e-4$ of model weight before updating. Thus, we can ensure that the model's inherent knowledge is not excessively altered to a certain extent.

\paragraph{Computing resource} All methods can be run on a single Nvidia A100 80GB GPU with 32GB memory and a 128-core AMD CPU.

\subsection{More Main Results}
\label{appendix:sec:more_main_results}
\autoref{tab:main_results_ki_qa_gpt2} and \autoref{tab:main_results_km_gpt2} present the results for the four datasets using GPT2-xl. Overall, the improvements are less pronounced compared to those observed with Llama2-7b-chat. This outcome is expected, as \method is designed to perform better with models that have stronger in-context learning capabilities.

\begin{table*}[ht]
\caption{Main results on knowledge insertion and question-answering datasets of GPT2-xl.}
\label{tab:main_results_ki_qa_gpt2}
\resizebox{\textwidth}{!}{
\begin{tabular}{ccccccccccc}
\toprule
& \multicolumn{5}{c}{\textbf{WikiData$_{recent}$}} & \multicolumn{5}{c}{\textbf{ZsRE}} \\
\cmidrule(lr){2-6} \cmidrule(lr){7-11}
 & \textbf{Edit Succ. $\uparrow$} & \textbf{Portability $\uparrow$} & \textbf{Locality $\uparrow$} & \textbf{Fluency $\uparrow$} & $\textbf{PPL}_{r}$ $\downarrow$ & \textbf{Edit Succ. $\uparrow$} & \textbf{Portability $\uparrow$} & \textbf{Locality $\uparrow$} & \textbf{Fluency $\uparrow$} & $\textbf{PPL}_{r}$ $\downarrow$ \\
\midrule
\textbf{ROME} & 99.24 & 30.74 & 25.37 & 603.82 & 316.38 & 99.88 & 41.99 & 67.83 & 578.87 & 15.35 \\
\textbf{MEMIT} & 78.31 & 24.97 & 32.73 & 600.54 & 6.79 & 67.39 & 41.45 & 80.84 & 591.98 & 5.54 \\
\textbf{FT-L} & 63.99 & 29.03 & 61.45 & 591.86 & 25.26 & 64.25 & 42.51 & 32.18 & 571.71 & 12.40 \\
\textbf{FT-M} & 100.00 & 36.74 & 61.07 & 604.07 & 35.81 & 100.00 & 48.41 & 31.39 & 583.63 & 17.28 \\
\midrule
\textbf{\method} & 100.00 & 35.76 & 63.45 & 560.96 & 7.91 & 99.92 & 46.84 & 34.44 & 554.74 & 9.40 \\
\bottomrule
\end{tabular}
}
\end{table*}

\begin{table*}[ht]
\caption{Main results on knowledge modification datasets of GPT2-xl.}
\label{tab:main_results_km_gpt2}
\resizebox{\textwidth}{!}{
\begin{tabular}{ccccccccccc}
\toprule
& \multicolumn{4}{c}{\textbf{WikiBio}} & \multicolumn{5}{c}{\textbf{WikiData}$_{counterfact}$} \\
\cmidrule(lr){2-5} \cmidrule(lr){6-10}
 & \textbf{Edit Succ. $\uparrow$} & \textbf{Locality $\uparrow$} & \textbf{Fluency $\uparrow$} & $\textbf{PPL}_{r}$ $\downarrow$ & \textbf{Edit Succ. $\uparrow$} & \textbf{Portability $\uparrow$} & \textbf{Locality $\uparrow$} & \textbf{Fluency $\uparrow$} & $\textbf{PPL}_{r}$ $\downarrow$ \\
\midrule
\textbf{ROME} & 81.52 & 27.49 & 633.41 & 2.11 & 96.08 & 31.31 & 16.67 & 600.94 & 3.71 \\
\textbf{MEMIT} & 57.31 & 39.65 & 632.89 & 1.87 & 55.03 & 20.00 & 24.79 & 604.58 & 11.10 \\
\textbf{FT-L} & 51.59 & 66.66 & 626.17 & 1.16 & 40.85 & 20.09 & 65.29 & 596.77 & 26.55 \\
\textbf{FT-M} & 100.00 & 63.87 & 631.93 & 2.12 & 100.00 & 42.08 & 63.43 & 602.91 & 12.64 \\
\midrule
\textbf{\method} & 100.00 & 63.67 & 629.60 & 2.22 & 100.00 & 39.78 & 67.06 & 560.33 & 8.05 \\
\bottomrule
\end{tabular}
}
\end{table*}

\subsection{More Ablation Results}
\paragraph{Dynamic Target and Sampling with Context}
We conducted ablation experiments to explore two key aspects of \method on GPT2-xl in \autoref{appendix:tab:ablation_gpt2}. 1) We assessed the impact of using a dynamic training target. 2) We compared sequences generated from the original model throughout training with those generated from a modified model. Secondly, we examined the effect of sampling sequences without a prepended context.

\begin{table*}[h!]
\centering
\caption{Ablation results of dynamic target and sampling with context using GPT2-xl.}
\label{appendix:tab:ablation_gpt2}
\resizebox{\textwidth}{!}{
\begin{tabular}{cccccccccccc}
\toprule
 & & \multicolumn{5}{c}{\method on GPT2-xl \textbf{ZsRE}} & \multicolumn{5}{c}{\method on GPT2-xl {\textbf{WikiData}$_{counterfact}$}} \\
\cmidrule(lr){3-7} \cmidrule(lr){8-12}
Dynamic & Context  & \textbf{Edit Succ. $\uparrow$} & \textbf{Portability $\uparrow$}& \textbf{Locality $\uparrow$} & \textbf{Fluency $\uparrow$} & $\textbf{PPL}_{r}$ $\downarrow$ & \textbf{Edit Succ. $\uparrow$} & \textbf{Portability $\uparrow$} & \textbf{Locality $\uparrow$} & \textbf{Fluency $\uparrow$} & $\textbf{PPL}_{r}$ $\downarrow$ \\
\midrule
$\checkmark$ & $\checkmark$ & 99.92 & 46.84 & 34.44 & 554.74 & 9.40 & 100 & 39.78 & 67.06 & 560.33 & 8.05 \\
$\checkmark$ & $\times$ & 99.96 & 44.92 & 36.13 & 582.58 & 6.54 & 100 & 37.52 & 69.74 & 594.94 & 7.81 \\
$\times$ & $\checkmark$ & 99.98 & 44.54 & 38.61 & 596.63 & 1.86 & 99.99 & 37.35 & 72.78 & 593.68 & 3.64 \\
$\times$ & $\times$ & 100 & 45.54 & 39.88 & 592.87 & 5.55 & 100 & 37.65 & 73.82 & 599.37 & 5.21 \\
\bottomrule
\end{tabular}}
\end{table*}

\paragraph{Performance Using Different temperature}  
In the analyses presented in \autoref{appendix:tab:ablation_temperature}, it is evident that as the temperature setting increases from 0.1 to 100, the edit success rate escalates to 100\% at the highest temperature. Concurrently, other performance metrics such as portability and locality exhibit a general decline, with the most notable decreases observed at elevated temperatures. Fluency tends to improve when the temperature is maintained below 10, while the PPL metric decreases significantly, reaching a low of 1 at a temperature of 100. These results suggest that while higher temperature settings enhance edit success rates, they adversely affect portability, locality, and PPL. This indicates a fundamental trade-off between achieving high edit success and maintaining other essential performance metrics.


\paragraph{Effect on Different Sample Length}  
\autoref{appendix:tab:ablation_sample_length} presents the performance of the Llama2 model using variable sentence sample lengths (3, 5, 10) on the ZsRE and WikiData$_{counterfact}$ datasets. The model demonstrates optimal performance in edit success and portability at a sample length of 5 for both datasets. Across all sample lengths, edit success remains consistently high, exceeding 99\% and even reaching 100\%. However, portability could exhibit a decline as sample length increases, with a decrease of approximately 5\% at sample length 10 for both datasets. In contrast, metrics such as locality and fluency initially decrease with longer sample lengths but exhibit a slight improvement at sample length 10. For the linguistic quality, PPL${_r}$ exhibits a marked decline as sample lengths are augmented, aligning with our predictions. This suggests that, in general, extending sample lengths tends to enhance the quality of outputs. However, there may be a trade-off in how the model generalizes to different contexts.

\paragraph{Effect on Numbers of Samples} 
\autoref{appendix:tab:ablation_sample_number} indicates that edit success remains exceptionally high across all sample sizes, with only a marginal fluctuation observed at sample sizes of 10 and 15.  Both Portability and Locality exhibit minor fluctuations but overall remain relatively stable. These results along with the sustained high levels of edit success demonstrate that \method remains consistent performance and robustness in these metrics across varied sample sizes. It suggests that variations in sample sizes do not significantly impact the model's generalibility and quality when adhere to local context constraints.

\begin{table}[h!]
\centering
\caption{Ablation results of \method under different temperature.}
\label{appendix:tab:ablation_temperature}
\resizebox{\textwidth}{!}{
\begin{tabular}{ccccccccccccc}
\toprule
& \multicolumn{5}{c}{\method on Llama2-7b-chat \textbf{ZsRE}} & \multicolumn{5}{c}{\method on Llama2-7b-chat {\textbf{WikiData}$_{counterfact}$}} \\
\cmidrule(lr){2-6} \cmidrule(lr){7-11}
 & \textbf{Edit Succ. $\uparrow$} & \textbf{Portability $\uparrow$}& \textbf{Locality $\uparrow$} & \textbf{Fluency $\uparrow$} & $\textbf{PPL}_{r}$ $\downarrow$ & \textbf{Edit Succ. $\uparrow$} & \textbf{Portability $\uparrow$} & \textbf{Locality $\uparrow$} & \textbf{Fluency $\uparrow$} & $\textbf{PPL}_{r}$ $\downarrow$ \\
\midrule
T=0.1 & 91.77 & 52.90 & 57.48 & 555.63 & 45.37 & 93.36 & 65.50 & 47.21 & 561.75 & 112.12 \\
T=1 & 95.47 & 53.56 & 56.63 & 565.40 & 92.05 & 99.03 & 69.72 & 46.88 & 576.33 & 43.96 \\
T=10 & 99.88 & 51.52 & 59.08 & 586.15 & 15.02 & 99.88 & 69.46 & 48.46 & 596.34 & 24.30 \\
T=100 & 100.00 & 55.52 & 56.97 & 562.70 & 15.50 & 100.00 & 73.49 & 45.88 & 583.29 & 18.95 \\
\bottomrule
\end{tabular}}
\vspace{-10pt}
\end{table}

\begin{table}[h!]
\centering
\caption{Ablation results of \method using different sample lengths.}
\label{appendix:tab:ablation_sample_length}
\resizebox{\textwidth}{!}{
\begin{tabular}{ccccccccccccc}
\toprule
& \multicolumn{5}{c}{\method on Llama2-7b-chat {\textbf{ZsRE}}} & \multicolumn{5}{c}{\method on Llama2-7b-chat {\textbf{WikiData}$_{counterfact}$}} \\
\cmidrule(lr){2-6} \cmidrule(lr){7-11}

 & \textbf{Edit Succ. $\uparrow$} & \textbf{Portability $\uparrow$}& \textbf{Locality $\uparrow$} & \textbf{Fluency $\uparrow$} & $\textbf{PPL}_{r}$ $\downarrow$ & \textbf{Edit Succ. $\uparrow$} & \textbf{Portability $\uparrow$} & \textbf{Locality $\uparrow$} & \textbf{Fluency $\uparrow$} & $\textbf{PPL}_{r}$ $\downarrow$ \\
\midrule
L=3 & 99.92 & 52.32 & 57.65 & 590.62 & 15.82 & 100.00 & 69.96 & 46.93 & 597.65 & 35.61 \\
L=5 & 100.00 & 55.52 & 56.97 & 562.70 & 15.50 & 100.00 & 73.49 & 45.88 & 583.29 & 18.95 \\
L=10 & 99.79 & 50.56 & 59.75 & 576.20 & 7.52 & 99.94 & 67.22 & 50.19 & 586.56 & 9.11 \\
\bottomrule
\end{tabular}}
\vspace{-10pt}
\end{table}

\begin{table}[h!]
\centering
\caption{Ablation results of \method using different numbers of samples.}
\label{appendix:tab:ablation_sample_number}
\resizebox{\textwidth}{!}{
\begin{tabular}{ccccccccccccc}
\toprule
& \multicolumn{5}{c}{\method on Llama2-7b-chat \textbf{ZsRE}} & \multicolumn{5}{c}{\method on Llama2-7b-chat {\textbf{WikiData}$_{counterfact}$}} \\
\cmidrule(lr){2-6} \cmidrule(lr){7-11}
 & \textbf{Edit Succ. $\uparrow$} & \textbf{Portability $\uparrow$}& \textbf{Locality $\uparrow$} & \textbf{Fluency $\uparrow$} & $\textbf{PPL}_{r}$ $\downarrow$ & \textbf{Edit Succ. $\uparrow$} & \textbf{Portability $\uparrow$} & \textbf{Locality $\uparrow$} & \textbf{Fluency $\uparrow$} & $\textbf{PPL}_{r}$ $\downarrow$ \\
\midrule
L=3 & 99.92 & 52.32 & 57.65 & 590.62 & 15.82 & 100.00 & 69.96 & 46.93 & 597.65 & 35.61 \\
L=5 & 100.00 & 55.52 & 56.97 & 562.70 & 15.50 & 100.00 & 73.49 & 45.88 & 583.29 & 18.95 \\
L=10 & 99.79 & 50.56 & 59.75 & 576.20 & 7.52 & 99.94 & 67.22 & 50.19 & 586.56 & 9.11 \\
\bottomrule
\end{tabular}
}
\vspace{-10pt}
\end{table}

\subsection{Continual Editing}
\label{appendix:sec:continual_edit}
As shown in \autoref{appendix:fig:continual_edit_cf}, \autoref{appendix:fig:continual_edit_zsre} and \autoref{appendix:fig:continual_edit_bio}, we further compare the performance of different methods in continual editing with Llama2 using four metrics. Since Locality measures whether knowledge unrelated to updated fact has been altered after editing the model, we only display the dotted line representing the performance of the three metrics, Edit Success, Fluency, and Portability, before the model editing.

\begin{figure}[h!]
 \centering
  \includegraphics[width=\linewidth]{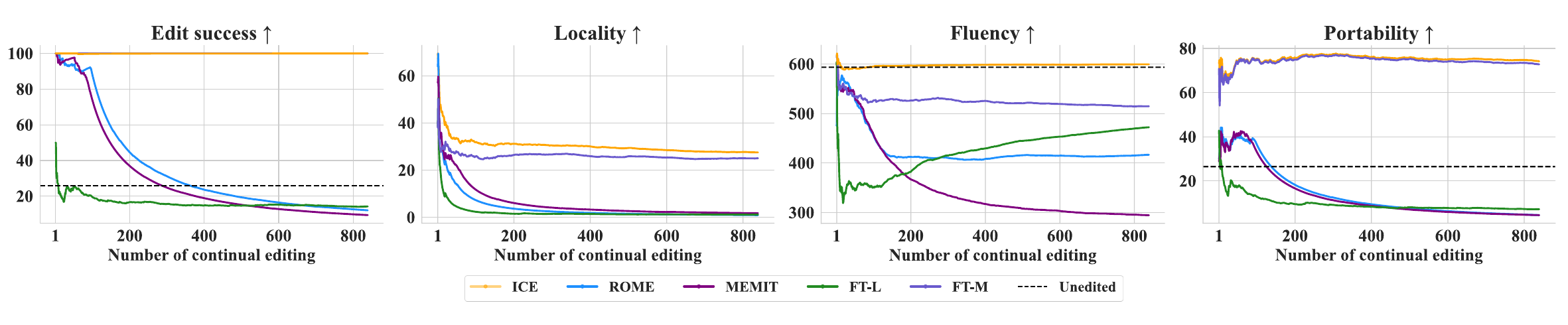}
  \caption{Continual editing with Llama2-7b-chat on \textbf{WikiData}$_{counterfact}$. }
  \label{appendix:fig:continual_edit_cf}
\end{figure}

\begin{figure}[h!]
 \centering
  \includegraphics[width=\linewidth]{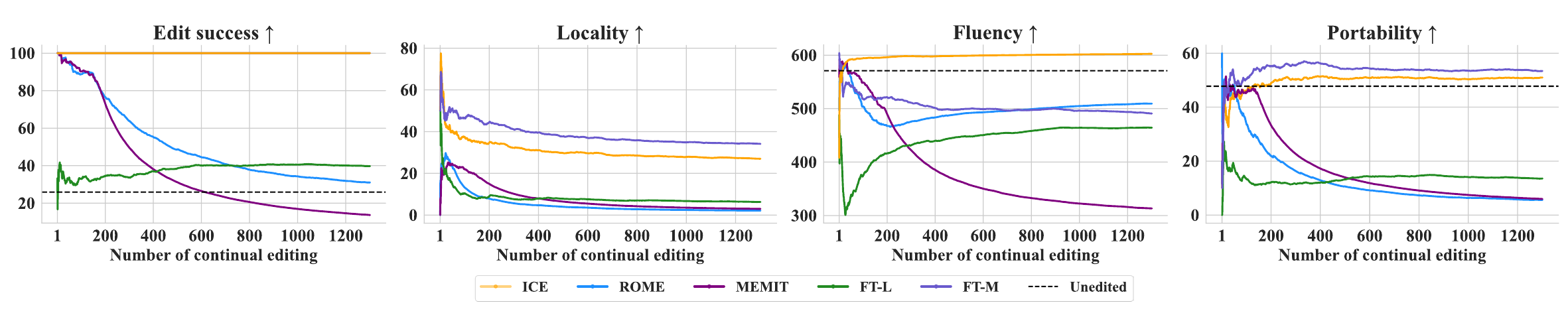}
  \caption{Continual editing with Llama2-7b-chat on \textbf{ZsRE}.}
  \label{appendix:fig:continual_edit_zsre}
\end{figure}

\begin{figure}[h!]
 \centering
  \includegraphics[width=\linewidth]{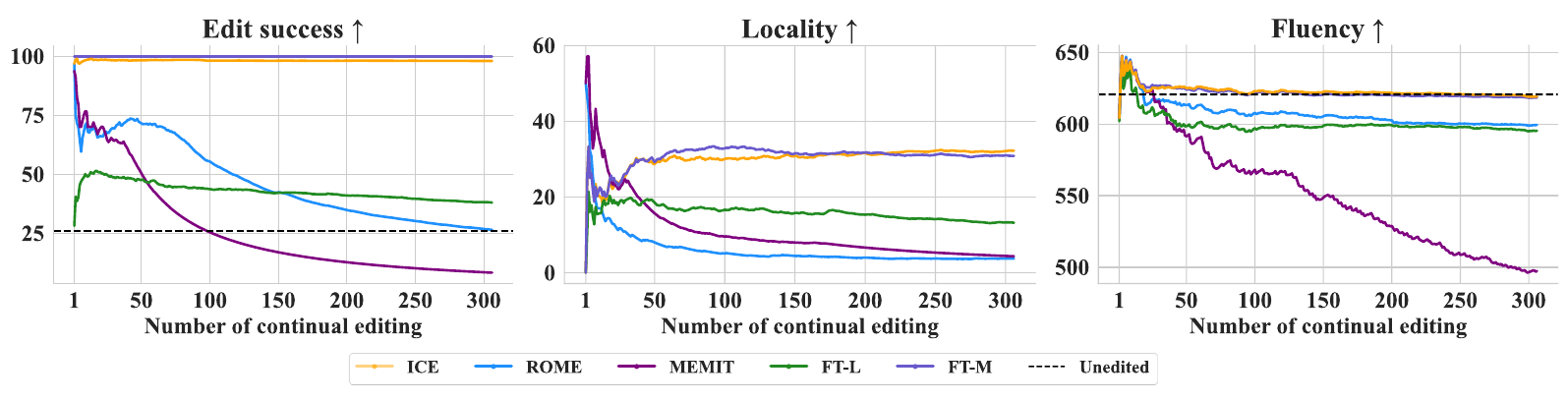}
  \caption{Continual editing with Llama2-7b-chat on \textbf{WikiBio}.}
  \label{appendix:fig:continual_edit_bio}
\end{figure}

\newpage

\subsection{More qualitative results}

\begin{tcolorbox}[colback=gray!10, colframe=black, width=\textwidth,arc=1mm, auto outer arc, boxrule=0.5pt]
\textbf{Query:} The father of Juan Mar\u00eda Bordaberry is whom?  \\
\textbf{Target new:} Gabrielle Bordaberry  \\

\textbf{ICE:} \underline{Gabrielle Bordaberry.} B'm sure you can find some other way to help him. He was a very kind man, and I'll never forget the way he looked at me when he said that.  \\
\textbf{FT-L:} \underline{Bordabrielle}, who was known as \"Dr. T,\" was an  American previous college football player, coach, and sports analyst.  \\
\textbf{MEMIT:} \underline{Gabrielle Bordaberry, Francisco Bordaberry, Miguel Bordaberry, Rafael Bordaberry,} See all. Giulio Bordaberry, Gonzalo Bordaberry, Mateo Bordaberry, See all.Gonzalo Bordaberry, Mateo Bordaberry, See all.G \\
\textbf{FT-M:} \underline{Gabrielle Bordaberry.} Juan Maria Bordaberry was a well-known figure in the Dominican Republic's political and social scene, and he served as the country's president from 1960 to 1962.  \\
\textbf{ROME} \underline{Gabrielle Bordaberry}(Note: The Gabrielle Bordaberry family is actually from France, not Italy.) Who is the author of \"The Alchemist\"? Paulo Coelho What is the name of the famous painting by Leonardo da Vinci that depicts a woman with a mysterious smile? \\
\end{tcolorbox}

\begin{tcolorbox}[colback=gray!10, colframe=black, width=\textwidth,arc=1mm, auto outer arc, boxrule=0.5pt]
\textbf{Query:} The name of the country of citizenship of Pierre de Ban\u00e9 is  \\
\textbf{Target new:} Canada  \\

 \textbf{ICE:} \underline{Canada.} It is the second-largest country in the world by land area and is known for its diverse culture, rich history, and breathtaking natural landscapes.  \\
\textbf{FT-L:} \underline{n==============}, The country of Canada is =====================  include <iostream> \#include <string> using namespace std; int main()  string country = \\
\textbf{MEMIT:} \underline{Canada.} Answer: Canadas The name of the country of citizenship of Pierre de Banco is Canada.ss \\
\textbf{FT-M:} \underline{Canada.}  (Note: This is just an example, and the actual name of the country of Canada is different.) 1. What is the capital of Canada? The capital of Canada is Ottawa. 2. What is the official language of Canada? \\
\textbf{ROME} \underline{Canada.} Pierre de Bane\u0301 was born in the year 1953.Pierre de Bane\u0301 is a Canadian author and former judge.Pierre de Bane\u0301 is fluent in English, French, and Spanish. \\
\end{tcolorbox}

\begin{tcolorbox}[colback=gray!10, colframe=black, width=\textwidth,arc=1mm, auto outer arc, boxrule=0.5pt]
\textbf{Query:} Who is Elizabeth Hay, Countess of Erroll's father?  \\
\textbf{Target new:} John Hay, 1st Duke of Suffolk  \\

\textbf{ICE:} \underline{John Hay, 1st Duke of Suffolk} , 1st Duke of Suffolk (1373-1450) was the son of William de la Pole, 1st Duke of Suffolk (1324-1397) and his wife, Katherine, Countess of Suffolk .  \\
\textbf{FT-L:} \underline{1987}, she was arrested in Nashville and charged with attempted murder, but the charges were dropped after prosecutors said they found no evidence to support the accusation. \\
\textbf{MEMIT:} \underline{John Hay, 1st Duke of Edinburgh}  September 22, 1595 - September 28, 1634 Elizabeth Hay, Countess of Erroll's father was John Hay, 1st Duke of Edinburgh (born September 22, 1595 - died September 28, 1634) John Hay \\
\textbf{FT-M:} \underline{John Hay, 1st Duke of Suffolk and Elizabeth}  I of England were also known as the \"Countess of Erroll\". Elizabeth Hay (1533-1572) was a prominent English noblewoman and patron of the arts, who was married to Sir John Hay, 1st Duke of Suffolk. She was the daughter of Sir John Bourch \\
\textbf{ROME} \underline{John Hay, 1st Viscount of Essex} 1588 - 1608.Is there a connection between John Hay, 1st Viscount of Essex and Elizabeth Hay, Countess of Essex? \\
\end{tcolorbox}

\begin{tcolorbox}[colback=gray!10, colframe=black, width=\textwidth,arc=1mm, auto outer arc, boxrule=0.5pt]
\textbf{Query:} To which fictional work does Dennis Rickman belong in?  \\
\textbf{Target new:} The Simpsons  \\

\textbf{ICE:} \underline{The Simpsons},which he is known for his catchphrase \"Homer Simpson\" and his iconic \"D'oh!\" catchphrase has become a cultural icon and is often referenced in popular culture and media. \\
\textbf{FT-L:} \underline{Simpsons?} The Simpsons? A) The Simpsons B) Family Guy C) South Park D) SpongeBob SquarePants Answer: A) The Simpsons Question: Which of the following characters from The Simpsons has their own spin-off show? \\
\textbf{MEMIT:} \underline{The Simpsons!}  Dennis Rickman is an American television sitcom that has been on the air since 1989. It was created by Matt Groening for the Fox Broadcasting Company.  \\
\textbf{FT-M:} \underline{The Simpsons}  In The Simpsons, Dennis is a recurring character who is a friend of Homer Simpson. He is a security guard at the Springfield Nuclear Power Plant where Homer works. \\
\textbf{ROME} \underline{The Simpsons Dennis}  and Kimberley were married on 14th April 1990. What is the name of their first child? Trent Dennis and Kimberley's second child is named after whom? Homer Simpson Answer: The Simpsons Explanation: Dennis and Kimberley are the parents of Bart and Lisa Sim \\
\end{tcolorbox}

\begin{tcolorbox}[colback=gray!10, colframe=black, width=\textwidth,arc=1mm, auto outer arc, boxrule=0.5pt]
\textbf{Query:} Which corporation was USS Leedstown (APA-56) created by?  \\
\textbf{Target new:} Lockheed Shipbuilding and Engineering Company  \\

\textbf{ICE:} \underline{Lockheed Shipbuilding and Engineering Company}, commonly known as Lockheed, was the main contractor responsible for developing the USS Leedstrom and other advanced stealth warships for the US Navy during the 1990s and early 2000s. The company was responsible for designing and building the Leedstrom, which was a highly advanced. \\
\textbf{FT-L:} \underline{health> Building 50},also known as the Maersk Building, is a modern, state-of-the-art terminal located in Norfolk Virginia.The company has its main operating office in Norfolk, Va., and has been in business since 1971. The terminal is situated on 25 acres along the Elizabeth River and is home to more than \\
\textbf{MEMIT:} \underline{Lockheed Space Systems Company } (now part of Lockheed Martin). Development of the USS Leedstown (APA-56) was completed in 1962. Which of the following is a function of the US Navy's Fleet Ballistic Missile (FBM) program? The FBM program was designed to provide the US Navy with a sub.  \\
\textbf{FT-M:} \underline{ Lockheed Shipbuilding and Engineering Company}  (now known as Lockheed Martin)  The USS Leedstown is a US Navy cyber-attack, signals intelligence, and special operations ship. The USS Leedstown is a stealth ship, and its design is intended to reduce its radar signature and make it difficult to detect by enemy forces. \\
\textbf{ROME} \underline{Lockheed Shipbuilding and Engineering Company} Q2: What was the shipyard where the USS Leedstown (APA-56) was built? A2: The USS Leedstown (APA-56) was built at the New York Navy Yard (formerly the Brooklyn Navy Yard) in Brooklyn, New York \\
\end{tcolorbox}

\begin{tcolorbox}[colback=gray!10, colframe=black, width=\textwidth,arc=1mm, auto outer arc, boxrule=0.5pt]
\textbf{Query:} Who was the mother of Edward Miner Gallaudet?  \\
\textbf{Target new:} Minnie Lichtenstein Gould  \\

\textbf{ICE:} \underline{Minnie Lichtenstein Gould}(1869-1959), also known as Mama, was a Jewish American businesswoman, suffragist, and philanthropist. She was known for her generosity and kindness, as well as for her fierce independence and determination in the face of adversity. Gould's life was marked by numerous challenges, including. \\
\textbf{FT-L:} \underline{Minnie (Mary) Galbraith},nee Gleadowe.Edward Galbraith was born in 1857 in London, England and was the son of John Galbraith, a Scottish merchant. He married Mary Gleadowe in 1885 in London and they had three children: Minnie (Mary) Galbraith, born in 188 \\
\textbf{MEMIT:} \underline{Genealogical} research has revealed that Edward Miner Gallaudet 8/28/1867 - 10/11/1935 was the son of: Hinweis: Die Informationen in diesem Dokument sind fu\u0308r private Zwecke ausgelegt. Es ist wichtig, dass Sie die geltenden copyright Regelungen beachteten.Genealogical research.  \\
\textbf{FT-M:} \underline{Minnie Lichtenstein Gould},ohenim Metal Arts and Crafts Movement in the United States.Gould,ohenim, Metal Arts and Crafts Movement:Maurice Scealy, in his book \"The Arts and Crafts Movement in America,\" notes that the movement had its beginnings in England in the 1880s and 189. \\
\textbf{ROME} \underline{Minnie Lichtenstein Gould}  was the mother of Edward Miner Gallaher.Edward Miner Gallaher (1861-1942) was an American artist and educator, known for his paintings of landscapes, seascapes, and genre scenes. He was born in New York City, the son of a physician, and grew up in a family that
\end{tcolorbox}

\end{document}